\renewcommand{\Bbb}{\mathbb}
\newcommand{\N}{\Bbb{N}}
\newcommand{\Z}{\Bbb{Z}}
\newcommand{\Q}{\Bbb{Q}}
\newcommand{\R}{\Bbb{R}}
\newcommand{\C}{\mathcal{C}}
\renewcommand{\L}{\mathcal{L}}
\renewcommand{\P}{\mathcal{P}}
\newcommand{\E}{\mathcal{E}}
\newcommand{\J}{\mathcal{J}}
\newcommand{\V}{\mathcal{V}}
\newcommand{\Emb}{\mathcal{Emb}}
\newcommand{\colim}{\operatorname{colim}}
\renewcommand{\phi}{\varphi}
\newcommand{\Poly}{\operatorname{Poly}}
\newcommand{\Set}{\operatorname{\textbf{Set}}}
\newcommand{\Conf}{\operatorname{Conf}}
\newtheorem{theorem}{Theorem}[section]
\newtheorem{cor}[theorem]{Corollary}
\newtheorem{lemma}[theorem]{Lemma}
\newtheorem{exa}[theorem]{Example}
\newtheorem{definition}[theorem]{Definition}
\newtheorem{re}[theorem]{Remark}
\title{Transparent Semantic Spaces: A Categorical Approach to Explainable Word Embeddings}
\begin{document}
\begin{frontmatter}

\author{Ares Fabregat-Hernández\corref{cor1}$^{1,2}$}
\ead{arfabher@upv.edu.es}
\cortext[cor1]{Corresponding author}

\author{Javier Palanca$^{1}$}
\ead{jpalanca@dsic.upv.es}
\author{Vicent Botti$^{1,3}$}
\ead{vbotti@dsic.upv.es}
\address{$^{1}$ Valencian Research Institute for Artificial Intelligence (VRAIN)\\ Universitat Politècnica de València, \\
	Camí de Vera s/n, 46022, Valencia,  Spain}
\address{$ ^{2} $Universidad Internacional de Valencia (VIU), C/Pintor Sorolla 21, 46002 Valencia, Spain}
\address{$^{3}$ valgrAI (Valencian Graduate School and Research Network of Artificial Intelligence)}

\begin{abstract}
	The paper introduces a novel framework based on category theory to enhance the explainability of artificial intelligence systems, particularly focusing on word embeddings. Key topics include the construction of categories $ \L_{T} $ and $ \P_{T} $, providing schematic representations of the semantics of a text $ T $, and reframing the selection of the element with maximum probability as a categorical notion. Additionally, the monoidal category $ \P_{T} $ is constructed to visualize various methods of extracting semantic information from $ T $, offering a dimension-agnostic definition of semantic spaces reliant solely on information within the text.
	
	Furthermore, the paper defines the categories of configurations $ \Conf $ and word embeddings $ \Emb $, accompanied by the concept of divergence as a decoration on $ \Emb $. It establishes a mathematically precise method for comparing word embeddings, demonstrating the equivalence between the GloVe and Word2Vec algorithms and the metric MDS algorithm, transitioning from neural network algorithms (black box) to a transparent framework. Finally, the paper presents a mathematical approach to computing biases before embedding and offers insights on mitigating biases at the semantic space level, advancing the field of explainable artificial intelligence.

\end{abstract}

\begin{keyword}
		Explainability \sep Category Theory  \sep Yoneda embedding \sep Word Embedding Algorithms \sep Divergence \sep Markov Category
\end{keyword}

\end{frontmatter}

\section{Introduction}

Word embeddings have emerged as a cornerstone in natural language processing (NLP) and machine learning (ML) applications, revolutionizing the representation of textual data (see \citep{incitti2023beyond}). At the heart of word embeddings lies the idea of capturing semantic relationships between words in a continuous vector space, enabling machines to understand and process human language more effectively (see \citep{hashimoto2016word, levy2014neural}). By mapping words to high-dimensional vectors, word embeddings encode semantic similarities and syntactic structures, thereby facilitating a wide array of downstream tasks such as sentiment analysis, named entity recognition, machine translation, and document classification. In addition to enhancing model performance and accuracy, word embeddings offer several practical advantages in ML applications. They provide a compact and dense representation of textual data, enabling efficient storage, retrieval, and computation. Moreover, word embeddings capture contextual nuances and semantic meanings that traditional bag-of-words or one-hot encoding schemes fail to capture, leading to more nuanced and context-aware language understanding. As such, word embeddings serve as foundational building blocks for a broad spectrum of ML tasks, empowering researchers and practitioners to unlock new capabilities in language understanding and processing. In recent years, word embeddings have become indispensable tools for natural language processing tasks, offering compact representations of textual data that capture semantic relationships between words. 

However, the design and interpretation of word embeddings present several challenges. Biases embedded in the training data can be perpetuated in word embeddings, leading to unfair associations and stereotypes. Moreover, the choice of embedding dimensionality poses a trade-off between capturing nuanced semantic relationships and computational efficiency. Additionally, the opacity of the embedding process, particularly in neural network-based approaches, can hinder interpretability and trustworthiness (see \citep{caliskan2022gender}).

Addressing these challenges requires interdisciplinary efforts from researchers in machine learning, natural language processing, and ethics. Strategies such as debiasing techniques, dimensionality reduction methods, and transparency-enhancing approaches are being actively explored to mitigate these challenges and improve the reliability and fairness of word embeddings in practical applications.

Explainable artificial intelligence (XAI) is indispensable for enhancing the transparency and interpretability of word embeddings, tackling the inherent opacity associated with the embedding process. As word embeddings become increasingly integral to real-world applications, comprehending how they derive semantic relationships and representations becomes imperative to ensure their trustworthiness and reliability (see \cite{Survey}). Techniques for explainability offer valuable insights into the underlying mechanisms of word embedding models, illuminating the encoding and processing of linguistic features. By rendering the embedding process interpretable, XAI empowers stakeholders to assess the robustness of word embeddings, detect potential biases, and validate the integrity of semantic relationships. Moreover, explainability fosters accountability and trust in AI systems by enabling users to comprehend and scrutinize the decisions made by word embedding models, thereby advocating ethical and responsible deployment across domains like natural language processing, information retrieval, and content recommendation.

In response to these challenges, we propose a novel framework based on category theoretical properties, as in \citep{CatSem,BPF}, for constructing word embeddings and analyzing semantic spaces, which presents several notable advantages over conventional methods. Firstly, our approach defines semantic spaces independently of dimension and added structure, offering greater flexibility and adaptability in capturing semantic relationships. This dimension-neutral strategy allows for exploring semantic spaces without being constrained by fixed dimensionality.

Secondly, we provide transparent mathematical formulas to elucidate the embedding process for two of the most used word embeddings the GloVe \cite{pennington2014glove} and the Word2Vec \cite{Mikolov2013EfficientEO} algorithms, offering insights into the underlying mechanisms without relying on ``hidden variables" or opaque transformations. This transparency enhances interpretability and facilitates the identification of biases inherent in the embedding process. By explicitly defining the embedding process, our method enables researchers and practitioners to understand and control the factors influencing the resulting embeddings.

Lastly, our framework facilitates the modification of semantic spaces to mitigate potential biases present in textual data. By equipping users with tools for bias reduction and modification, our approach promotes fairness and inclusivity in natural language processing applications. We believe that our dimension-neutral, transparent approach represents a significant step forward in the development of interpretable, bias-aware word embeddings. By offering researchers and practitioners the means to understand and harness the semantic structure of textual data, our method holds promise for advancing natural language processing, machine learning, and related fields.

The subsequent sections of this paper are structured as follows: Section \ref{sec:Spaces} introduces three distinct spaces, namely $ X_{T}$, $\C_{T}$, and $\L_{T}$, which serve as visualization aids for understanding semantic relationships and reframing certain machine learning (ML) concepts within a categorical framework. In Section \ref{sec:P_T}, we consolidate the structures present in $\L_{T}$ to form a novel category, where morphisms encode semantic information of sets of expressions. Leveraging this characteristic, we define semantic spaces towards the section's conclusion.

Building upon the groundwork laid in Section \ref{sec:P_T}, Section \ref{sec:WE} delineates the categories of configurations and word embeddings, culminating in the central assertion of this paper: the equivalence between GloVe and Word2Vec neural word embeddings and metric MDS embeddings. Finally, the concluding section offers a synthesis of the findings, outlines future research avenues, and encapsulates the paper's key takeaways.

Finally, we have included two appendices after the manuscript. The first appendix provides the proof of the results outlined in the paper. The second appendix offers comprehensive details regarding the construction of weighted limits and colimits in enriched category theory.

\section{The spaces $X_{T}$, $\mathcal{C}_{T}$ and $\mathcal{L}_{T}$}\label{sec:Spaces}
In \citep{CatSem} the authors introduced the syntax and semantic categories based on a text. They are categories enriched over the unit interval to represent the statistical information present on a given text $T$. In this section, we introduce those categories and fix some notations.

\subsection{The Alexandrov Space $X_{T}$}

Given a text $T$ yields a poset $(X_{T},\leq)$, that is a set with a partial order. The set $X_{T}$ is the set of all expressions found in $T$ (that we also call $n$-grams of $T$). The preorder $\leq$ is given by $x\leq y$ if $x$ is a sub-expression of $y$. We define the \textbf{length of $x$, $ \ell(x)$} as the number of words in contains. This space can be endowed with the Alexandrov topology. This means that for every $x\in X_{T}$ we have a smallest neighbourhood:

\begin{equation*}
	U_{x}=\{t\in X_{T}\colon x\leq t\},
\end{equation*}

that is the set of all sequences in $T$ containing $x$. Conceptually, this is the set to take into account to compute conditional probabilities of extensions. We can refine those sets by filtering by the length of the sequences:

\begin{equation*}
	U^{n}_{x}=\{t\in X_{T}\colon x\leq t,\ \ell(t)= n \}.
\end{equation*}

In this case, we are only looking at sequences of length $n$ that contain $x$. Finally, we can grade the space $X_{T}$ using the lengths of sequences:

\begin{equation}
	X_{T}=\bigoplus_{n\geq 1}X_{T}^{n},
\end{equation}
where $X_{T}^{n}$ is the set of sequences of length $n$. In this way, the set $X_{T}^{1}$ is the set of words that appear in $T$, also called the dictionary of $T$, the set $X_{T}^{2}$ is the set of bigrams of $T$ and so on. Thus, we can rewrite the sets $U_{x}^{\leq n}$ as 

\begin{equation*}
	U_{x}^{\leq n}=\mathcal{F_{n}}(U_{x})=\bigoplus_{m\leq n} \left( U_{x}\cap X_{T}^{m}\right).
\end{equation*}

The collection $\{U_{x}^{n}\}$ form a neighborhood system of $x$ for a pretopology on $X_{T}$. 

%\begin{figure}
%\centering
%	\label{fig:AlexandrovCones}
%\includegraphics[scale=0.25]{./Figures/Alexandrov Cones.jpeg}
%\caption{Alexandrov Cones on $X_{T}$.}
%\end{figure}
\begin{figure}
	\centering
	\includegraphics[scale=0.2]{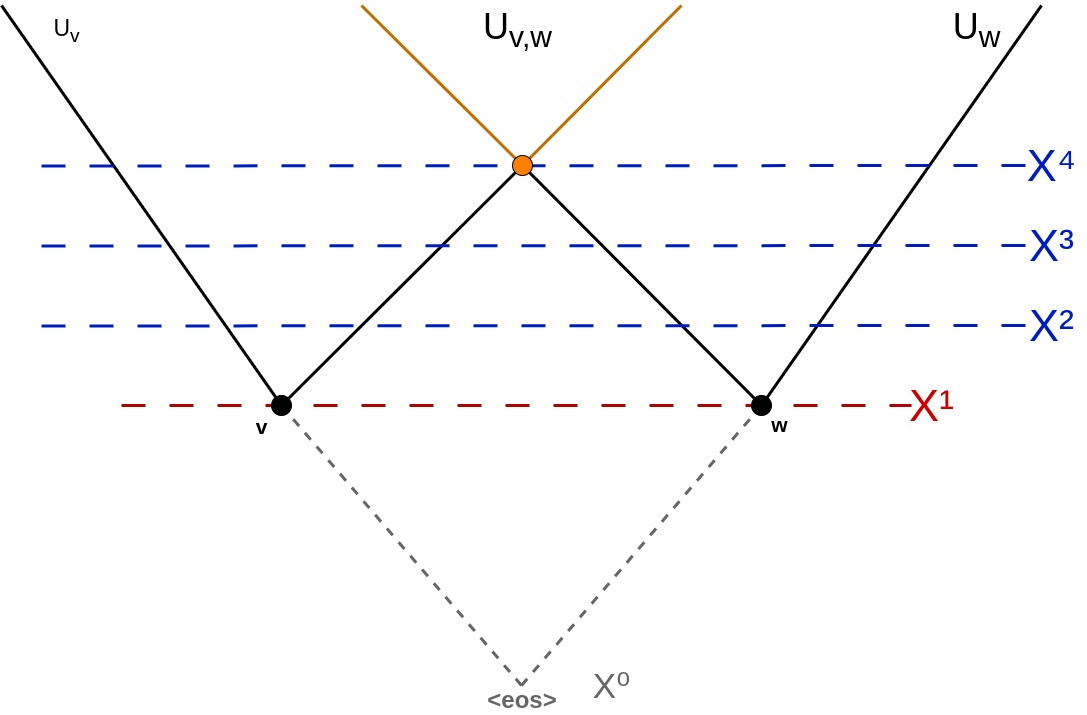}
	\caption{Alexandrov Cones on $X_{T}$.}
	\label{fig:ACbien}
\end{figure}

Lastly, most of the algorithms use a special symbol ``<eos>" when counting the occurences. We can incorporate this as the $0$-th graded piece of $X_{T}$, that is $X_{T}^{0}=\{<eos>\}$. With this, we can picture the space 

\begin{equation}
	X_{T}=\bigoplus_{n\geq 0}X_{T}^{n}.
\end{equation}

as a series of cones emanating from each word in $X_{T}^{1}$ and rays emanating from $X_{T}^{0}$ to each word (see figure \ref{fig:ACbien}\footnote{We have omitted the subscript $T$ in the figure to improve readability.}). Two cones emanating from two words $x$ and $y$ are going to intersect at $X_{T}^{m}$ with $m$ the minimum length of expression that contains both $x$ and $y$. Thus this gives a first measure of closeness: the closer two words are the lower the intersection of the cones is. Of course, this picture is imperfect, there might not only be only one expression that contains both words but several of them. Thus the point should be higher dimensional (eg. a line). Furthermore, this picture (and by extension the Alexandrov space) does not take into account frequency. This means that if there are expressions of length $n$ that contain $x$ and $y$, and other expressions of the same length that contain $x$ and $z$ we cannot decide if $y$ or $z$ is closer to $x$. This is a reflection of the fact that it is a topological space and frequency implies some kind of metric. 

Furthermore, we cannot measure distances between two elements if we restrict ourselves to a graded piece $X_{T}^{n}$. Since the ultimate goal of a word embedding algorithm is to give (linear) structure to $X_{T}^{1}$ based on probabilistic information, we need to add more structure to the spaces. This will be done by upgrading our construction of the space $X_{T}$ to a category-theoretic object $\C_{T}$.

\subsection{The Syntax Category $\mathcal{C}_{T}$}

Given a poset $(X_{T},\leq)$ we can construct a category $C_{T}$ named the \textbf{syntax category of $T$}. Its objects are expressions found in $T$ and morphisms are given by the preorder $x\leq y$. That is, there is a morphism $\C_{T}(x,y)$ if and only if $y$ is an extension of $x$. Thus, the set $\C_{T}(x,y)$ is either empty or contains a single element. The unit interval $([0,1],\leq)$ is also a preorder where we can multiply two elements we get a symmetric monoidal category $([0,1],\leq,\otimes)$, where $a\otimes b = ab$ for all $a, b\in [0,1]$.

Thus we can enrich the syntax category $\C_{T}$ over $[0,1]$ turning each $\hom$-set in an object of $[0,1]$. In our case, we want those objects to be conditional probabilities. Precisely, this means:

\begin{equation}\label{eq:def}
	\C_{T}(x,y)=\left\{\begin{matrix}
		p(y|x),&\text{if $x\leq y$;}  \\
		0,& \text{otherwise,}  \\
	\end{matrix}\right.
\end{equation}

Notice that in this setting $\C_{T}(x,x)=1$ and we have a probabilistic triangle inequality

\begin{align*}
	\C_{T}(x,y)\otimes\C_{T}(y,z)\to&\C_{T}(x,z)\\
	p(y|x)p(z|y)\leq& p(z|x).
\end{align*}

\begin{exa}\label{exa:probs}
	\begin{enumerate}
		\item Given a text $T$ the most basic question is what is the probability that a word $w$ is next given a given history $g^{-}$. This can be seen as the probability $p(g^{-}w|g^{-})$. If we let the probability come from counts, i.e.
		
		\begin{equation}
			\C_{T}(g^{-},g^{-}w)=p(g^{-}w|g^{-})=\frac{C(g^{-}w)}{C(g^{-})}
		\end{equation}
		
		we recover the $n$-gram probability distribution with $n=\ell(g)$ the length of $g$.
		
		\item We can do the same procedure but in reverse: give the probability that a word $w$ precedes a certain $n$-gram $g^{+}$:
		
		\begin{equation}
			\C_{T}(g^{+},wg^{+})=p(wg^{+}|g^{+})=\frac{C(wg^{+})}{C(g^{+})}
		\end{equation}
	\end{enumerate}
\end{exa}
The limitation of this framework is that the expression must be in the text. This means that to compute the probability of a word $w$ given a window-based context $g^{\pm}$ (this means that $t=g^{-}wg^{+}$ is an object in $\C_{T}$) we would need some extra structure since $g^{\pm}$ is not an object in $\C_{T}$ and hence we cannot compute $\C_{T}(g^{\pm},t)$.
We would like to combine the preceding examples to get the probability

\begin{equation}\label{eq:pw2v}
	p(w|g^{\pm})=\frac{C(g^{-}wg^{+})}{\sum_{v\in \C_{T}^{1}}C(g^{-}v	g^{+})}
\end{equation}

To do that we have to introduce a grading in $\C_{T}$ so we can define $\C_{T}^{1}$ of the equation. There is a concept of graded monads\footnote{\url{https://ncatlab.org/nlab/show/graded+monad}} but the problem with this concept is that it is based on endofunctors (a functor $F\colon \C\to\C$) indexed by a monoidal category, and in our case we want to generate subcategories of $\C_{T}$ graded by the length of the expression.  

This means that we have a functor from $\C_{T}$ to the category $(\N,\leq)$:

\begin{equation}
	\ell \colon \C_{T}\to \N
\end{equation}

with $\ell(g)$ being the length of $g$. With that we can define the $n$-th graded piece of $\C_{T}$ as the fiber of $n\in \N$: 

\begin{equation}
	\C_{T}^{n}=\ell^{-1}(n),
\end{equation}

Those are precisely the expressions of $T$ with length $n$. In particular we have that $\C_{T}^{0}=\text{<eos>}$ and $\C_{T}^{1}$ are the words found in $T$. Thus we can recover the Alexandrov cones of Figure \ref{fig:ACbien} by starting from an element $w$ of $\C_{T}^{1}$ and then, taking all the elements $g$ of $\C_{T}^{2}$ such that $\C_{T}(w,g)\neq\{0\}$ and repeating the process for each graded piece.

Furthermore, a necessary condition for the existence of an element in $C_{T}(g,t)$, with $g$ in $\C_{T}^{m}$ and $t$ in $\C_{T}^{n}$, is that $m < n$. Indeed, if $m\leq n$ an expression of length $n$ may be extended by an expression of length $m$. Thus in $\C_T$, there are no morphisms between objects of the same graded piece. This means that we have no direct way of comparing two objects of the same length. For that, we are going to introduce a bigger category denoted $\L_{T}$ with the same objects as $\C_{T}$ but more possible morphism between objects.

\subsection{The Category $\mathcal{L}_{T}$}

First of all, we need to recall a result from category theory called the Yoneda embedding.

\begin{lemma}\label{lem:yoneda}
	Let $\mathcal{C}$ be a category and $x,y$ objects in $\mathcal{C}$. Then $x$ is isomorphic to $y$ if and only if $\C(x,-)$ is isomorphic to $\C(y,-)$.
\end{lemma}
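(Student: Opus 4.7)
The plan is to prove this as a standard consequence of the (full) Yoneda embedding being fully faithful, but broken into two manageable directions so that no auxiliary machinery beyond the definition of a natural transformation is needed.

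For the forward direction, I would start from an isomorphism $f\colon x\to y$ with inverse $g\colon y\to x$ and build the natural transformations by precomposition. Concretely, define $\alpha\colon \C(y,-)\Rightarrow\C(x,-)$ by $\alpha_{z}(h)=h\circ f$ for $h\in\C(y,z)$, and $\beta\colon\C(x,-)\Rightarrow\C(y,-)$ by $\beta_{z}(k)=k\circ g$. Naturality in $z$ is immediate from associativity of composition, and the identities $\alpha\circ\beta=\mathrm{id}_{\C(x,-)}$ and $\beta\circ\alpha=\mathrm{id}_{\C(y,-)}$ follow directly from $g\circ f=\mathrm{id}_x$ and $f\circ g=\mathrm{id}_y$. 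This direction is essentially bookkeeping.

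The converse is where the content lies. Given a natural isomorphism $\phi\colon\C(x,-)\Rightarrow\C(y,-)$ with inverse $\psi$, I would extract the candidate morphisms by the usual Yoneda trick: evaluate at the ``universal element''. Namely, set
\begin{equation*}
	g=\phi_{x}(\mathrm{id}_{x})\in\C(y,x),\qquad f=\psi_{y}(\mathrm{id}_{y})\in\C(x,y).
\end{equation*}
Naturality of $\phi$ applied to the morphism $f\colon x\to y$ yields the commutative square
\begin{equation*}
	\phi_{y}(f\circ\mathrm{id}_{x})=f\circ\phi_{x}(\mathrm{id}_{x})=f\circ g,
\end{equation*}
while on the other side the identification of $\phi$ and $\psi$ as mutual inverses gives $\phi_{y}(f)=\phi_{y}(\psi_{y}(\mathrm{id}_{y}))=\mathrm{id}_{y}$. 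Hence $f\circ g=\mathrm{id}_{y}$, and the symmetric argument with $\psi$ applied to $g$ gives $g\circ f=\mathrm{id}_{x}$.

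The main obstacle, such as it is, is just being careful about the variance: $\C(x,-)$ is covariant, so a natural transformation $\C(x,-)\Rightarrow\C(y,-)$ corresponds to a morphism $y\to x$ (not $x\to y$), and reversing this in one's head is the only place a slip is likely. Once the naturality square is written down correctly at $f$ (respectively $g$), the rest of the argument is formal and requires no further structural input beyond the axioms of a category.
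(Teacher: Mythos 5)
Your proof is correct: both directions are the standard argument, the forward one by pre-composition with $f$ and $g$, the converse by the Yoneda trick of evaluating the components at the identities and then invoking naturality at the candidate morphisms. The variance bookkeeping is also right ($g=\phi_{x}(\mathrm{id}_{x})$ does land in $\C(y,x)$, reflecting the contravariance of $x\mapsto\C(x,-)$). The only point of comparison worth making is that the paper itself does not prove this lemma at all: it is stated as a recalled standard fact, and for the version actually needed later (the enriched one, where the hom-objects live in $[0,1]$ rather than $\Set$) the authors simply refer to Kelly's book. So your argument is not ``a different route from the paper'' so much as a proof where the paper offers none; it is the right proof of the ordinary-category statement, but note that it does not by itself cover the enriched variant the paper ultimately relies on, where ``natural isomorphism'' must be interpreted as an isomorphism of $[0,1]$-valued functors.
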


This means that we can substitute the object of the category by setting the morphisms from that object to all the objects. Since we are working with enriched categories we would need the enriched version of the Yoneda embedding, where the isomorphism is taken in the category of enrichment and not merely in the category of sets. This is reminiscent of the fact that any (locally small) category is a category enriched over the category of sets. Since the statement of the enriched version of the embedding is practically identical to lemma \ref{lem:yoneda} we refer the reader to \cite[Section 2.4]{kelly1982basic} for a precise statement and proof.

\begin{exa}\label{exa:YE}
	Let $T$ be a text and $\C_{T}$ its associated category. Then, given a word $w$ the Yoneda embedding produces a functor $\C_{T}(w,-)\colon \C_{T}\to [0,1]$ given by $g\mapsto \C_{T}(x,g)=p(g|x)$. 	In other words, we are replacing the word $w$ (and its position in $\C_{T}$) with its set of all the conditional probabilities of extensions.
\end{exa}

By example \ref{exa:YE}, we see that we can now compare two words by comparing their embeddings in the functor category. That is, we can define the probability 

\begin{equation}\label{eq:endoL}
	p(v\| w)=\inf_{g\in\C_{T}}\left\{\frac{p(g|v)}{p(g|w)},1\right\}
\end{equation}

with $v,w\in \C_{T}^{1}$. This yields the following definition.

\begin{definition}
	
	Let $\L_{T}$ be the enriched category whose objects are the same as in $\C_{T}$, morphisms between objects of different graded pieces are as in $\C_{T}$, and morphisms between objects in the same graded piece are given by
	
	\begin{equation*}\label{eq:samemorphism}
		p(a\| b)=\inf_{g\in\C_{T}}\left\{\frac{p(g|a)}{p(g|b)},1\right\}.
	\end{equation*}
\end{definition}

\begin{figure}
	\centering
	\includegraphics[scale=0.3]{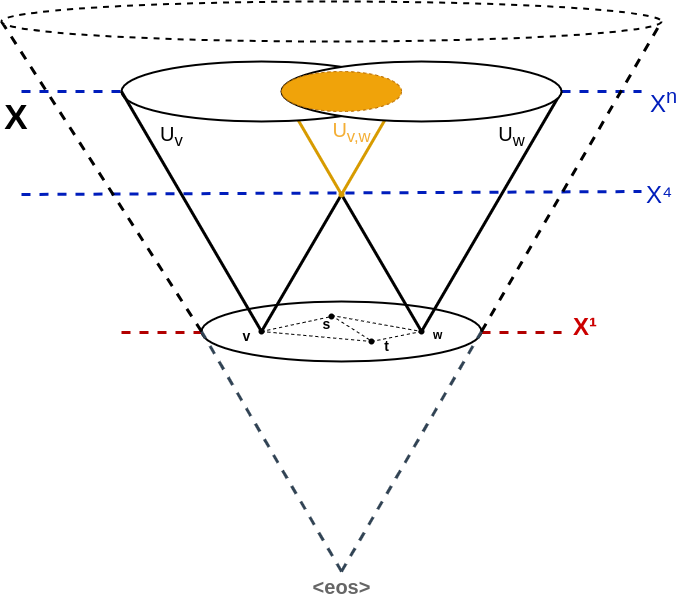}
	\caption{Alexandrov Cones on $L_{T}$.}
	\label{fig:AlexandrovConesLT}
\end{figure}

If $a$ and $b$ are similar, that is, they appear in similar contexts and with similar frequency we can say that they have comparable meanings. This is represented by the probability $p(a\|b)$ which can be interpreted as the likelihood that we could use both expressions interchangeably (using the text $T$ as a reference).

\begin{re}\label{re:notation}
	We will use the notation $p(a\| b)$ to signify $p(a|b)$ if $\ell(b)<\ell(a)$ and $p(a\| b)$ if $\ell(b)=\ell(a)$.
\end{re}

Figure \ref{fig:AlexandrovConesLT} shows what we have accomplished by passing from $\C_{T}$ to $\L_{T}$. As shown by the dashed lines on $X^{1}$ we can now compare elements on the same graded piece of $\L_{T}$. This effectively ``adds dimensions" to the representation of the category. Furthermore, this allows us to look at each $\L_{T}^{n}$ as a space on its own which will be crucial to our definition of semantic spaces later on.

Another thing that we can deduce from figure \ref{fig:AlexandrovConesLT} is that $U_{v,w}=U_{g}$ with $g$ the shortest expression that contains both $v$ and $w$. This in turn yields that expressions that contain $g$ will be ``closer" than expressions that contain $g'<g$ but not $g$.

With that in mind, we can start exploring some properties of this category. The first topic to consider is limits and colimits. Given that we are operating within an enriched framework, the definitions and properties can be rather intricate in terms of notation. We defer the precise definitions to Appendix A. The main idea is that limits and colimits are going to formalize the concepts of maximum probability given a context.

\begin{lemma}\label{lem:aux}
	Let $T$ be a text and $\L_{T}$ its syntax category and let $\textbf{2}=\{1,2\}$ be the category with two objects and only identity morphism. Let $W\colon\textbf{2}\to [0,1]$ be a functor of weights. Then colimit of the diagram of $F\colon\textbf{2}\to \L_{T}$ is an object $g^{*}=\colim{}^{W}F$ in $\L_{T}$ such that
	
	\begin{equation}\label{eq:colimminT}
		p(t\|g^{*}) = \min\left\{\frac{p(t\|F(1))}{W(1)},\frac{p(t\|F(2))}{W(2)},1 \right\} 
	\end{equation}
	
\end{lemma}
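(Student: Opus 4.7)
The plan is to invoke the universal property of weighted colimits in the enriched setting and then reduce the statement to a direct computation of the internal hom in the base $([0,1],\leq,\otimes)$.

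Following the conventions recalled in Appendix B, the weighted colimit $g^{*}=\colim{}^{W}F$ is characterized (up to canonical isomorphism, via the enriched form of Lemma \ref{lem:yoneda}) by the natural isomorphism
\[
\L_{T}(g^{*},t)\;\cong\;[\textbf{2},[0,1]]\bigl(W,\ \L_{T}(F(-),t)\bigr),\qquad t\in\L_{T}.
\]
Using the paper's shorthand $p(t\|g^{*})=\L_{T}(g^{*},t)$, the task reduces to computing the right-hand side explicitly as a function of $t$.

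Next I would unpack that right-hand side. Because $\textbf{2}$ is the discrete category on two objects, the enriched hom of the functor category is the end over $\textbf{2}$, which collapses to the infimum of the pointwise internal homs:
\[
\min\Bigl\{\,W(1)\Rightarrow\L_{T}(F(1),t),\ W(2)\Rightarrow\L_{T}(F(2),t)\,\Bigr\}.
\]
The base $([0,1],\leq,\cdot)$ is a closed symmetric monoidal category whose internal hom, adjoint to multiplication, is $a\Rightarrow b=\min\{b/a,\,1\}$ (with the convention $0\Rightarrow b=1$). Substituting and flattening a minimum of minima yields exactly the expression on the right-hand side of equation \eqref{eq:colimminT}.

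The only non-automatic step is exhibiting an actual object $g^{*}\in\L_{T}$ whose hom profile matches this formula; uniqueness is then immediate from the enriched Yoneda embedding. This is where I expect the main obstacle: a general $[0,1]$-enriched category need not admit all weighted colimits, so I would appeal to the construction sketched in Appendix A, verifying that the target hom profile is in fact represented in $\L_{T}$ by checking the identity against the definitions of \eqref{eq:def} and \eqref{eq:samemorphism} coming from Example \ref{exa:YE}. In effect, one identifies $g^{*}$ with the expression in $\L_{T}$ realising the ``best common extension'' of $F(1)$ and $F(2)$ reweighted by $W$, and the required probability inequality reduces to the probabilistic triangle inequality already established for $\C_{T}$.
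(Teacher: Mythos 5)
Your proposal is correct and follows essentially the same route as the paper's proof: both reduce the claim to the universal property of the weighted colimit from Definition \ref{def:ecolim} and then evaluate the weighted hom $[\textbf{2},[0,1]]\bigl(W,\L_{T}(F(-),t)\bigr)$ in the base $[0,1]$. The only difference is that you carry out the end/internal-hom computation explicitly (and helpfully flag the representability issue), whereas the paper delegates that step to Definition 6 and Theorem 2 of \citep{CatSem} and does not address existence.
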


The proof can be found in Appendix \ref{sec:AppendixA}, Lemma \ref{lem:auxApp}. Now, in equation \eqref{eq:colimminT} the role of the functor of weights is still a little unclear. Let's assume that, for a given pair $F(1)=g^{-}$ and $F(2)=g^{+}$ of expressions and a pair of weights $w_{i}$ with $i=1,2$, the minimum is satisfied by $p(t\|g^{+})/w_2$. Then by the universal property of the (weighted) colimit, we have that

\begin{equation*}
	p(t\|g^{*})=\frac{p(t\|F(1))}{w_2}.
\end{equation*}

Furthermore, by the composition rules on an enriched category, we have:

\begin{equation*}
	p(t\|g^{+})\geq p(g^{*}\|g^{+})\cdot p(t\|g^{*}),
\end{equation*}

thus yielding $w_2\geq p(g^{*}\|g^{+})$. This means that the weights bound the conditional probabilities of the expressions by the contexts. The safest approach to the theory would be to set $w_i=1$ for $i=1,2$ which would remove the effect of the weights. If we let some of the weights be different from one, the effect would be to limit the importance we are giving to a certain context of the colimit.

It is worth to analyze what $g^{*}=\colim^{W}F$ might be. First of all, $g^{*}$ has to contain both $F(1)=g^{-}$ and $F(2)=g^{+}$ which we think of as contexts of $g^{*}$. By definition of the morphisms in $\L_{T}$, if we assume that $g^{+}\neq g^{-}$ and the end of $g^{-}$ does not coincide with the begging of $g^{+}$, the colimit is at least in $\L_{T}^{n}$ with $n = \ell(g^{-})+\ell(g^{+})$. Indeed the colimit will be the most likely expression to contain both contexts and it might be $g=g^{-}g^{+}$. However, if we choose the context expression as in \cite{Mikolov2013EfficientEO}, we can assume the colimit is of the form: $g^{*}=g^{-} w g^{+}$ in $\L_{T}^{2k+1}$ with $k=\ell(g^{-}) = \ell(g^{+})$. With this, we have the following result.

\begin{theorem}\label{thm:colimmaxprob}
	With the same hypothesis of the preceding lemma, the colimit of the diagram $g^{*}=\colim ^{W}F$ satisfies the following equation:
	
	\begin{equation}\label{eq:colimmaxprobT}
		g*= \max_{t\in \L_{T}}p(t\| g^{\pm})
	\end{equation}
	with $t=g^{-}wg^{+}\in \L_{T}^{2k+1}$ and
	
	\begin{equation}\label{eq:probW2V}
		p(t\| g^{\pm})=\frac{C(t)}{\sum_{v\in\L_{T}^{1}}C(g^{-}vg^{+})}.
	\end{equation}
\end{theorem}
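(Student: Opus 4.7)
The plan is to apply Lemma \ref{lem:aux} with $W \equiv 1$ (the ``safest'' choice identified in the discussion after that lemma) and then identify the resulting colimit object as the argmax of the Word2Vec-style probability defined in \eqref{eq:probW2V}.

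First, I would invoke the window-based assumption recalled just before the theorem, restricting to candidate colimit objects of the form $g^* = g^- w g^+$ with $w \in \L_T^1$, so that $g^* \in \L_T^{2k+1}$. Then, for any candidate $t = g^- v g^+$ of the same form, I would use Example \ref{exa:probs} to compute $p(t\|g^-) = C(t)/C(g^-)$ and $p(t\|g^+) = C(t)/C(g^+)$, and substitute into Lemma \ref{lem:aux} to obtain
\[
p(t\|g^*) = \min\Bigl\{\tfrac{C(t)}{C(g^-)},\; \tfrac{C(t)}{C(g^+)},\; 1\Bigr\}.
\]

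Next, I would invoke the universal property of the weighted colimit: $g^*$ is initial among objects receiving morphisms from both $g^-$ and $g^+$, and by Lemma \ref{lem:aux} the hom-value out of $g^*$ to any candidate $t$ is forced to equal this minimum. This identifies $g^*$ with the window completion $g^- w g^+$ whose count $C(g^- w g^+)$ is largest among all choices of $w \in \L_T^1$, since a larger numerator produces the largest compatible hom-value on both legs simultaneously. Finally, I would observe that the denominator $\sum_{v \in \L_T^1} C(g^- v g^+)$ does not depend on $v$, so the argmax is unchanged when we pass from the unnormalized count ratio to the normalized expression $p(t\|g^{\pm}) = C(t)/\sum_v C(g^- v g^+)$, yielding \eqref{eq:colimmaxprobT} and closing the identification.

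The main obstacle is reconciling the normalization: Lemma \ref{lem:aux} produces ratios divided by $C(g^{\pm})$, whereas the theorem demands normalization by $\sum_v C(g^- v g^+)$, which is the count of the window event rather than of $g^-$ or $g^+$ individually. Bridging this gap likely requires either a careful tuning of the weight functor $W$ so that the minimum in \eqref{eq:colimminT} effectively normalizes to a probability distribution over $\L_T^{2k+1}$, or an argument that the argmax is invariant under this rescaling combined with a direct verification that the chosen $g^*$ still satisfies the enriched colimit universal property with the original weights.
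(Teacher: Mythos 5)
Your proposal is correct and follows essentially the same route as the paper: Lemma \ref{lem:aux} expresses the colimit hom-values as minima of ratios whose denominators do not depend on the candidate $t$, so maximizing that minimum is equivalent to maximizing $C(t)$, hence to maximizing $p(t\|g^{\pm})$, and the universal property pins down $g^{*}$ as that argmax. The ``obstacle'' you flag at the end is already resolved by your own argmax-invariance observation, which is precisely the paper's argument (``since the denominator is the same in both probabilities''), and it extends to general $W$ since $W(1)$ and $W(2)$ are constants independent of $t$.
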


The proof can be found in Appendix \ref{sec:AppendixA}, Theorem \ref{thm:colimmaxprobA}. Notice that the probability of equation \ref{eq:probW2V} is essentially the one recovered by the Word2Vec algorithm in \cite{Mikolov2013EfficientEO} as can be seen in \cite{levy2014neural}. Thus, under the assumptions of Lemma \ref{lem:aux}, the colimit of the diagram $F\colon \textbf{2}\to \L_{T}$  yields the expression of length $2k+1$ with maximum conditional probability. Equivalently, the the colimit gives the middle word $v$ such that $g^{*}=g^{-}vg^{+}$ has maximum probability. 

Lastly, notice that we are assuming that the colimit is of the form $g^{*}=g^{-}vg^{+}$ and not $g^{*}=g^{+}vg^{-}$ (contexts are in the opposite order). This is because language is non-commutative and it is expected that $p(g^{-}vg^{+}|g^{\pm})\geq p(g^{+}vg^{-}|g^{\pm})$. This assumption is implicitly in \cite{Mikolov2013EfficientEO} since the algorithm (CBOW in this case) sums all of the coordinates of the one-hot encoding of the context words which renders the ordering moot and still produces the word with greater probability.

All the theory thus far requires to work with a fixed (conditional) probability distribution which means that there is only one way to compute $p(t|g)$ in $\L_{T}$. Any other way would produce a different category $\L^{\prime}_{T}$. This is a limitation when we want to work with different probabilities and different spaces to study bias or convergence of approximations to a certain probability distribution. To incorporate these new contributions into our framework we need to introduce Markov categories and a slight generalization of them.

\section{The Category $\P_{T}$}\label{sec:P_T}

In this section, we want to aggregate all the information of all different categories $\L_{T}$. For that, we construct the category $\P_{T}$. The objects in $ \P_{T} $ are going to be sets of expressions in the text $T$ and their Cartesian products. The morphisms are going to matrices whose entries represent probabilities satisfying certain conditions. An important subclass of morphisms is going to be the class of row-stochastic matrices. This means that a morphism $f\colon X \to Y$ in $\P_{T}$ is a matrix with non-negative entries, columns indexed by elements of $X$, and rows indexed by elements of $Y$ with $f(x,y)\in [0,1]$. We call this kind of matrices \textbf{probabilistic matrices} and the probabilities found within \textbf{transition probabilities}. If furthermore, the matrix satisfies for all $x\in X$,

\begin{equation}\label{eq:MCatprob}
	\sum_{y\in Y}f(x,y)=1.
\end{equation}

we call the matrix row-stochastic and we write $f(x,y)=f(y|x)$.
The interpretation of equation \eqref{eq:MCatprob} is that each $f(y|x)$ represents the conditional probability of $y\in Y$ given $x\in X$. In particular, for any morphism 

\begin{equation}
	p\colon \L_{T}^{0}\to X
\end{equation}

we have $p(x \ |<eos>)=p(x)$. This means that morphisms from the $0$-th graded piece yield probability distributions on $X$.

We have already seen examples of objects and morphisms of $\P_{T}$, namely the graded pieces of $\L_{T}$: for each $n\in\N$ the set of all expressions of length $n$ of $T$, $\L_{T}^{n}$, is an object in $\P_{T}$. There are however more objects than those specific sets. For instance, given a word $w$ we can consider the set:
\begin{equation}
	\pi^{-1}_{2k+1}(w)=\{g\in \L_{T}^{2k+1}\colon g=g^{-}wg^{+}\}\subset \L_{T}^{2k+1}.
\end{equation}

This set comes from considering the mapping

\begin{equation}
	\pi_{2k+1}\colon\L_{T}^{2k+1}\to\L_{T}^{1}
\end{equation}
where every expression $t\in \L_{T}^{2k+1}$ is mapped to its middle word with probability $1$ and to any other word with probability $0$.  This showcases one key difference between this category and the ones treated above: we can have morphisms between any two objects in the category $\P_{T}$ whereas in the previous cases, morphisms were limited to extensions of expressions and comparisons between expressions (equation \eqref{eq:samemorphism}).

Another important difference is that in the previous categories, you had to choose one way to compute the probabilities and then the composition law implied some restrictions on how the probabilities would be. This is not the case anymore which allows us to consider more situations (probability metrics) all at once. Before moving on, it can be helpful to revisit some of the examples in the previous section.

\begin{exa}
	\begin{enumerate}
		\item The $n$-gram probabilities from example \ref{exa:probs} can be understood as  a sub-matrix of the morphism
		$ P\colon\L_{T}^{n}\to\L_{T}^{n+1}$ with $P(g,t)=p(t|g)$ with $t=gw$, $w$ in $\L_{T}^{1}$.
		
		\item In the $n$-gram example the only sets that appear are the graded pieces. We can however form more interesting examples involving more complex sets. For instance, given a word $w$, we can consider the mapping
		\begin{equation}
			P_{w}\colon\{w\}\to\pi^{-1}_{2k+1}(w)
		\end{equation}
		with the probabilities from equation \eqref{eq:pw2v}. Then, by aggregating over all the words in $T$, we get a matrix
		\begin{equation}\label{eq:fibers}
			\L_{T}^{1}\to\prod_{w}\pi^{-1}_{2k+1}(w)
		\end{equation}
	\end{enumerate}
\end{exa}

% ----------------------------------------------------------------------------------------------------- %
Furthermore, in $\P_{T}$ we have more structure that we did not have in the previous categories: a tensor product. Indeed, given two sets $X, Y$ in $\P_{T}$ we can construct its tensor product $X \otimes Y$. Since the objects of $\P_{T}$ have to relate to the text $T$, we define the tensor product of $X Y$ in $\P_{T}$, denoted $X \otimes Y$, as the set of expressions of $T$, $g\otimes t$, with $g \in X $ and $t \in Y$. That is, an element of $X \otimes Y$ starts with an expression of $X$ and ends with an expression of $Y$, with nothing in between. Notice that this product is not symmetric. Indeed, there might not be any expressions in $T$ that start with elements of $X$ and end with elements of $Y$. In that case, we write $X\otimes Y \cong \L_{T}^{0}$. Let's illustrate this with some examples.

\begin{exa}
	\begin{enumerate}
		\item There are some special cases of this product being symmetric. In particular, if we consider the graded pieces
		
		\begin{equation}
			\L_{T}^{k}\otimes\L_{T}^{m}\cong \L_{T}^{m}\otimes\L_{T}^{k} \cong \L_{T}^{k+m}.
		\end{equation}
		
		This is evidenced by the fact that we can divide any expression of length $k+m$ in two ways: the first $k$ words and the last $m$ words or the first $m$ words and the last $k$ words.
		
		\item On the other hand, if $X,Y \subset \L_{T}^{1}$ such that there are no expressions of length $2$ starting with a word in $X$ and ending a word in $Y$ one would have $X\otimes Y= \L_{T}^{0}$. That is, this product would be the $0$-th graded piece if the combinations of words in $X$ and $Y$ were not in the text $T$.
		
		\item Finally, it is worth noticing that for every $X$ in $\P_{T}$ we have that $X\otimes \L_{T}^{0}\cong X\cong \L_{T}^{0}\otimes X$. Thus, the $0$-th graded piece acts as the unit object for the product.
	\end{enumerate}
\end{exa}

\begin{re}
	The second case where the tensor product of two objects yields the unit object is reminiscent of the tensor product of abelian groups. For instance, in the case of abelian groups, we have $\Q\otimes \Z/n\Z\cong 0$.
\end{re}

Now we turn our attention to the morphisms in $\P_{T}$. Specifically, how to compose them: in the case of row-stochastic matrices there is no problem since the product of two row-stochastic matrices is row-stochastic. The problem arises when considering probabilistic matrices: the product of two matrices filled with ones is not a probabilistic matrix. To better understand how to define a composition involving probabilistic matrices we are going to look first at endomorphisms in $\P_{T}$.

Given a set $X$ of $\P_{T}$ an \textbf{endomorphism} is a morphism $f\colon X\to X$. The problem is that we cannot translate morphisms in $\L_{T}$ to stochastic matrices. Indeed, every entry of the matrix is non-negative and the diagonal is filled with ones. This means that to have a good notion of a category expressing the conditional probabilities of equations \eqref{eq:def} and \eqref{eq:endoL} we need to allow more general matrices than the stochastic ones. Not only that, we need to consider a slightly modified version of the matrix product for the composition to work.

To include those matrices we need to take a closer look at the construction of endomorphisms in $\P_{T}$. By equation \eqref{eq:endoL}, morphisms among elements of the same graded piece en $\L_{T}$ encode the probability of finding an expression instead of the other (given a text $T$). This was achieved by computing an infimum among all elements in $\L_{T}$ which was necessary to satisfy the composability condition in the enriched category $\L_{T}$. In contrast, in the not-yet-defined category $\P_{T}$ we do not have that restriction and we can choose different sets of expressions to compute the infimum of equation \eqref{eq:endoL}. This means that we can have an endomorphism $P = (p_{v,w})$ of $\L_{T}^{1}$ (as an object in $\P_{T}$) where the probabilities are computed with the formula:

\begin{equation}
	p(v\| w) = \inf_{g\in \L_{T}^{k}}\left\{\frac{p(g| v)}{p(g| w)}	\right\}.
\end{equation}

By confining the infimum to a graded component, we narrow down the space in which to seek similarities among words. This may be more efficient computationally but it gives a biased view of the meaning of the words since we are just looking for similarities in a small piece of all the categories we have at our disposition. Thus, an endomorphism in $\P_{T}$ encodes the similarity between words given a certain context which is the set we take the infimum from. 

This still leaves us with the problem of composing those morphisms. Indeed, since now we have two distinct classes of morphism, stochastic matrices that represent conditional probabilities and probabilistic matrices that represent similarities, the composition rule is not as straightforward as in the product of stochastic matrices. The key aspect is to define a composition rule that makes conceptual sense as well as being mathematically correct. To achieve it we need to satisfy the composition rules of a category meaning that composing by the identity morphism yields the same matrix and that the composition of two matrices, stochastic or probabilistic, should yield another stochastic or probabilistic matrix. Once we have a mathematically robust definition, we need to interpret it to make sure it fits the framework.

It helps to consider the situation as a game. The game consists of a player giving expressions of the text $T$ given certain information. If the information comes from the same graded piece, the player has to select expressions with similar meanings. If the information comes from a different graded piece, the player has to select possible extensions or retractions. Hence, the composition of two morphisms can be thought of as a game played in two rounds one per morphism.

With all of that in mind, we can define the composition of morphisms. Given two composable morphisms $S$ and $P$, if they are both stochastic matrices, the composition is usual matrix multiplication. This makes sense since to give the possible extensions of extensions we need to account for all of the possible ways to extend each expression of $T$. This results in each entry of the resulting matrix being a sum of probabilities of extensions.

On the other hand, if one of the matrices is probabilistic the product is not quite the usual matrix multiplication: each of the entries is going to be divided by the ceiling function of the sum of the transition probabilities. Notice that in the case of a stochastic matrix (since the sum of each column is $1$) that number is one and we recover the usual matrix multiplication.

After these definitions, it is worth working out some examples to understand the interpretation.

\begin{exa}
	\begin{enumerate}
		\item The morphism of equation \eqref{eq:fibers} is a diagonal of ones since this morphism is equivalent to choosing the middle word of a $2k+1$ expression. Since there is no uncertainty, just picking the middle word, the entries are either zero or one.
		
		\item  In example \ref{exa:probs} the morphisms can be aggregated into a matrix (a morphism) $P^{-}\colon \L_{T}^{n}\to \L_{T}^{n}\otimes \L_{T}^{1}$ indexed by the elements of  $\L_{T}^{n}$ and $\L_{T}^{n+1}$ with $P^{-}(g^{-},g^{-}w) = p(g^{-}w|g^{-})$.
	\end{enumerate}
\end{exa}

The last thing to take care of in order before we (rigorously) define the category $\P_{T}$ is to analyze how the tensor product interacts with the morphisms. If we have two morphisms $p\colon A\to B$ and $q\colon X\to Y$ we can form a new morphism $p\otimes q\colon S= A\otimes X\to B\otimes Y = T$ such that for every $t= b\otimes y \in T$ and every $s= a\otimes x \in S$ we have

\begin{equation}\label{eq:tensormorph}
	p\otimes q(t|s)= p(b|a)q(y|x).
\end{equation}

\begin{figure}
	\centering
	\includegraphics[scale=0.4]{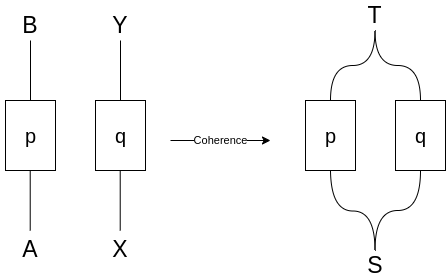}
	\caption{Diagram of the tensor product of morphisms.}
	\label{fig:tensor_morphisms}
\end{figure}

The diagrammatic form of equation \eqref{eq:tensormorph} can be found in Figure \ref{fig:tensor_morphisms} where we interpret the tensor morphism as a king of coherence condition between spaces and morphisms. Notice that to construct the tensor product of $A$ and $X$ we are implicitly using the Cartesian product $A\times X$ (which corresponds to the left-hand side of \ref{fig:tensor_morphisms}). These two products and their morphisms have very different meanings. Indeed if we have objects $A, X, Y$ in $\P_{T}$ and morphisms $p\colon A\otimes X\to Y$ and $f\colon A\times X \to Y$ we have that $p(t|a\otimes x)$ is the conditional probability of finding $t$ given the expression $a\otimes x =ax$ whereas $f(t|a, x)$ expresses the probability of finding $t$ given the expressions $a$ and $x$ (it is a conditional probability conditioned to two events). With all those preparations we can finally define the category $\P_{T}$.

\begin{definition}
	We establish the category $\mathcal{P}_{T}$, where the objects consist of sets of expressions within the text $T$ and their Cartesian products. The morphisms within this category represent either probabilistic matrices or row-stochastic matrices. Additionally, $\mathcal{P}_{T}$ functions as a monoidal category $(\mathcal{P}_{T}, \otimes, \mathcal{L}_{T}^{0})$, featuring the defined tensor product $\otimes$ and the unit object being the $0$-th graded piece.
\end{definition}

\begin{re}
	
	We can have a different monoidal structure on $\P_{T}$ making it symmetric monoidal. Indeed, we can  consider the product of two objects $X$ and $Y$ to be $ X\boxtimes Y$ the Cartesian product of the sets and the product of morphisms $f\boxtimes g$ the Kronecker product of matrices. In that case, the identity of the product is again the $0$-th graded piece\footnote{We have opted to write $\C_{T}^{0}$ instead of $\L_{T}^{0}$ because we only want to consider stochastic matrices coming from morphisms in $\C_{T}$ for the construction of the Markov category.} $\C_{T}^{0}$. Notice that $ X\boxtimes Y\cong Y\boxtimes X$ makes it indeed symmetric.
	This makes the subcategory of $(\P_{T},\boxtimes \C_{T})$ of sets of expressions of $T$ and their Cartesian product, row-stochastic matrices as morphisms a Markov category in the sense of \cite{fritz2023representable} and \cite{perrone2023markov}.
\end{re}

\begin{theorem}\label{thm:MonoidalCat}
	The category $(\P_{T},\otimes, \L_{T}^{0})$ is indeed a monoidal category.
\end{theorem}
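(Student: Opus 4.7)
The plan is to verify the four pieces of data that make $(\P_T, \otimes, \L_T^0)$ a monoidal category: (i) that $\otimes$ is a bifunctor $\P_T\times\P_T\to\P_T$; (ii) that $\L_T^0$ is a two-sided unit; (iii) that there is a natural associator $\alpha_{A,B,C}:(A\otimes B)\otimes C\to A\otimes(B\otimes C)$; and (iv) that the pentagon and triangle coherence axioms hold. Most of the structural identities reduce to statements about concatenation of expressions in $T$, so the overall strategy is to first pin down the object-level behaviour of $\otimes$ and then transport everything to morphisms via the formula \eqref{eq:tensormorph}.

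First I would treat the object level. By definition, $A\otimes B=\{a\otimes b : a\in A,\ b\in B,\ a\otimes b\text{ occurs in }T\}$, which is just a subset of the free monoid of expressions of $T$ cut out by the condition ``occurs in $T$''. Concatenation of expressions is strictly associative and $\L_T^0=\{<eos>\}$ acts as the strict unit for concatenation, so on objects $(A\otimes B)\otimes C$ and $A\otimes(B\otimes C)$ are literally the same subset of expressions of $T$, and similarly $\L_T^0\otimes A=A=A\otimes\L_T^0$. I would therefore take the associator and the unitors to be the identity (making this a strict monoidal structure on objects), which lets me sidestep any non-trivial coherence calculation.

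Next I would verify that $\otimes$ is a bifunctor. For morphisms $p:A\to B$ and $q:X\to Y$, formula \eqref{eq:tensormorph} defines $p\otimes q$ entrywise by $p(b|a)q(y|x)$. I would check three things. First, the entries lie in the correct class: if both $p,q$ are row-stochastic, then $\sum_{b,y} p(b|a)q(y|x) = \big(\sum_b p(b|a)\big)\big(\sum_y q(y|x)\big)=1$, so $p\otimes q$ is row-stochastic; if either is merely probabilistic, the entries still lie in $[0,1]$ so $p\otimes q$ is probabilistic. Second, identities are preserved: $\mathrm{id}_A\otimes\mathrm{id}_X$ has entry $\delta_{a,a'}\delta_{x,x'}$, which is the identity on $A\otimes X$. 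Third, interchange holds: I must check $(p'\circ p)\otimes(q'\circ q)=(p'\otimes q')\circ(p\otimes q)$ where $\circ$ is the matrix product modified by the ceiling-of-column-sum normalisation described just before the definition of $\P_T$. For stochastic composites this is the standard fact that the Kronecker product commutes with matrix multiplication; when probabilistic factors are present I would expand both sides and observe that the column-sum of a Kronecker product factors as the product of the column-sums, so the normalising ceilings on each side agree. This interchange check is where I expect the only real work to live, and it is the main obstacle: the non-standard composition rule has to be compatible with the tensor, and one must handle the case-split between stochastic and properly probabilistic matrices carefully so that the ceiling normalisation on $(p'\otimes q')\circ(p\otimes q)$ equals the product of the ceilings appearing in $p'\circ p$ and $q'\circ q$.

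Finally I would address the coherence axioms. Since the associator and unitors have been taken to be identity matrices, both the pentagon and the triangle reduce to the identity diagram on the relevant object, and naturality of $\alpha,\lambda,\rho$ follows from naturality of the identity together with the bifunctoriality checked in the previous step. Packaging these verifications together yields the theorem; the technical heart is the interchange law in the presence of the ceiling-normalised composition, and everything else is then formal.
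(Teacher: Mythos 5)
Your proposal covers only half of what the paper's proof actually establishes. A monoidal category is first of all a category, and because the composition law in $\P_{T}$ is non-standard --- entries of a composite involving a probabilistic matrix are divided by the ceiling of a sum of transition probabilities --- it is not automatic that composites land back in the class of probabilistic or row-stochastic matrices, that this composition is associative, or that identity matrices are two-sided units for it. The paper's proof spends its entire first half on exactly these points, via the estimate $h_{ij}=\sum_{k}f_{ik}g_{kj}/\lceil\sum_{j}f_{ij}\rceil\leq 1$ and the ensuing associativity and identity checks, whereas you take the category structure of $\P_{T}$ entirely for granted. Since the ceiling-normalised product is the genuinely novel ingredient of $\P_{T}$, omitting this verification is a real gap relative to the statement being proved.

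On the monoidal axioms themselves you are, if anything, more thorough than the paper: the paper argues associativity and unitality of $\otimes$ only at the level of objects (concatenation of expressions is strictly associative with $\L_{T}^{0}$ as strict unit) and then asserts the triangle and pentagon identities, never checking that $\otimes$ is a bifunctor or that the interchange law holds. Your plan to verify interchange is therefore a welcome addition, but the step you lean on is shaky: while the relevant sums of a Kronecker product do factor as the product of the corresponding sums, $\lceil xy\rceil\neq\lceil x\rceil\cdot\lceil y\rceil$ in general (take $x=y=3/2$, which is attainable since probabilistic matrices can have row sums exceeding $1$), so the normalising factor in $(p'\otimes q')\circ(p\otimes q)$ need not equal the product of the normalising factors appearing in $(p'\circ p)\otimes(q'\circ q)$ when both factors are properly probabilistic. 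As sketched, your interchange argument fails in that case; to repair it you would need either to restrict which morphisms are tensored or to argue that the theorem does not actually require strict bifunctoriality of $\otimes$ in the generality you attempt.
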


The proof of the theorem can be found in Appendix \ref{sec:AppendixA}, Theorem \ref{thm:MonoidalCatA}. This category can be understood as the category of Alexandrov cones starting at $\L_{T}^{0}$ where each cone displays different relations among its elements. As seen in figure \ref{fig:PTCones} we have several cones emanating from a single point (The $0$-th graded piece). We have shown two morphisms $f_1$ that goes from som $X$ in $\P_{T}$ to $B$ and an endomorphism of $\L_{T}^{1}$, that is $S\colon \L_{T}^{1}\to \L_{T}^{1}$.

The reason we have represented the arrow $S$ going from one cone to another is that each cone can be interpreted as a different $\L_{T}$ with different conditional probabilities and we can pass from one to another via endomorphisms. Indeed, having conditional probabilities on a matrix implies the matrix is stochastic. Assume this matrix to be $p\colon X\subseteq \L_{T}^{1}\to Y$. Then the composition of $p$ with $S$ restricted to $X$ yields $p\circ S_{|X} \colon X\to Y$ another morphism from $X$ to $Y$ but with different probabilities. Thus, we have effectively modified the space to a new space that still looks like a cone but with a different internal structure.

Moreover, this implies that given a system of stochastic matrices $\{p_k\}$ between graded pieces we can create a cone. If we add an endomorphism of $S\colon \L_{T}^{1}\to \L_{T}^{1}$ to the system $\{S,p_k\}$, by composition, we can create two cones, and so on. Hence, the cones are determined by the endomorphisms of $\L_{T}^{1}$.

\begin{figure}
	\centering
	\includegraphics[scale=0.3]{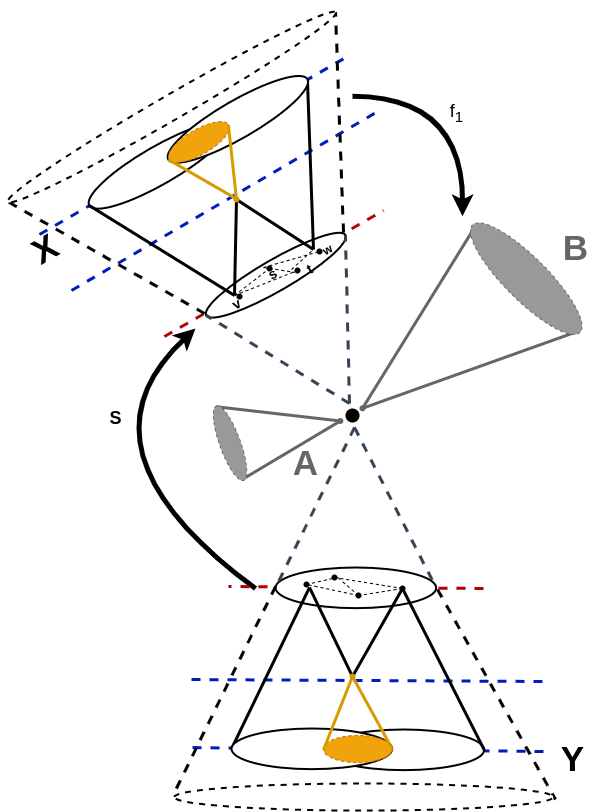}
	\caption{The category $\P_{T}$ as Alexandrov Cones with the action of its morphisms.}
	\label{fig:PTCones}
\end{figure}

\begin{exa}[Semantic Telephone]\label{exa:st}
	Given an endomorphism $f$ of $\L_{T}^{1}$ we can compose it with itself $k$ times to create a sequence of probable words after $k$ iterations. This means that we can interpret this composition as a game (again) where we ask what is the most probable word given a certain word $k$ times and then find the result. This yields a certain dynamic behavior in $\L_{T}^{1}$.
	
	Similarly, if we start with an expression $g$ in $\L_{T}^{n}$ and we have $k$ morphisms $f_{1},\dots, f_{k}$, the result of $(f_k\circ\cdots\circ f_1)(g)$ is a vector with probabilities. The expression $t$ in $\L_{T}^{n}$ with the highest probability can be understood as the result of a game of telephone where each player passes the information (the expression $g$) to the next which interprets it with probability matrix $f_i$. The information is going to get distorted and it is unlikely that we recover $g$ at the end unless the $f_i$'s we close to the identity.
	
	This uncertainty comes from the meaning of expressions based on a text, that is from semantic similarity between the expressions, whereas in the regular game of telephone, the uncertainty comes from phonetic similarity. That is why we have named this game the ``semantic telephone". It encodes the noise a channel can have when transmitting information.
\end{exa}

Figure \ref{fig:PTCones} helps us visualize the game of semantic telephone as jumping from elements of $\L_{T}^{n}$ of one cone to another. The number of endomorphisms present is the number of cones we jump to which is also the number of players in the game. These endomorphisms represent the certainty of finding words in similar contexts. This means that we can think of them as semantic similarities between words yielding a certain arrangement of $\L_{T}$. Furthermore, since having an endomorphism of $\L_{T}^{1}$ (which is a matrix) is the same as giving a list of points and their relations\footnote{This is the same phenomenon that occurs in graph theory, where having the adjacency matrix is equivalent to having the weighted graph.} we have the following definition.

\begin{definition}
	Given a text $T$ and the category $\P_{T}$ we define a \textbf{semantic space} to be an endomorphism\footnote{To base the definition of a semantic space makes sense since, as discussed earlier, we can reconstruct the $\L_{T}$ from the stochastic matrices and an endomorphism.}  of $\L_{T}^{1}$in $\P_{T}$.
	
\end{definition}

This definition may seem strange but since classically semantic spaces are representations of natural language that are capable of capturing meaning and endomorphisms of $\L_{T}^{1}$ to achieve that goal it makes sense to call them thusly. Furthermore, since having the matrix is equivalent to having the words and their relations we do have all the information needed. 

\begin{re}
	Note that we treat endomorphisms in $ \P_{T} $ as distinct entities in two regards. Firstly, they provide insights into the structure of the space where they are defined. Secondly, we conceptualize them as arrows between objects within different cones, where the elements remain consistent but their relationships undergo changes. This dual perspective reflects an aspect of the Yoneda embedding (Lemma \ref{lem:yoneda}), where each object $ X $ is akin to its functor $ y^{X} $. In essence, viewing a semantic space as a matrix representing word relationships is equivalent to regarding it as an arrow originating from the space with the identity matrix. 
\end{re}

There are other definitions of semantic spaces, for instance in \cite{hashimoto2016word} and in many other works, where the authors define them to be vector spaces over concepts where Euclidean distances between points indicate semantic similarities. However, this definition imposes two properties that are not intrinsic to language: dimension and composition. 

Expanding on this, the necessity for it to be a vector space implies the requirement for a dimension to be chosen. The most intuitive selection would align with the size of the dictionary, facilitating one-hot encoding. However, it's worth noting that a lower-dimensional space might effectively capture semantic similarities, while conversely, a higher-dimensional space may not precisely reflect the meanings of words. Consequently, dimensionality isn't inherently inherent to a semantic space.

The second characteristic introduced by this definition is linearity and composability. While undoubtedly a valuable trait, it doesn't entirely align with the natural essence of language. Although words can indeed be combined akin to the summation of vectors, the operation in the vector domain is commutative, unlike the non-commutative nature of language. Additionally, not every combination of $n$ words forms a meaningful sentence, whereas any set of $n$ vectors can be summed. While vector spaces offer a strong representation of linguistic information, they inherently introduce additional parameters and properties not intrinsic to language itself. This delineation underscores our preference for a more abstract yet still representable definition.

In the next section, we explore how to incorporate the ``vector space" view of semantic spaces into the categorical framework through word embeddings.

\section{Word Embeddings}\label{sec:WE}

In this section, we establish word embeddings derived from semantic spaces. Given that we'll address both words within $ T $ (or $ \L_{T}^{1} $) and word vectors in $ \R^{n} $, we'll establish specific notation. When referencing words in $ T $, we'll employ the notation $ w_{i} $, while for embedded word vectors, we'll denote them as $ v_{i} $. To define word embeddings, we'll begin by introducing a particular type of object known as configurations.

\begin{definition}
	\begin{enumerate}
		\item A \textbf{configuration} $(X, C_{X})$ consists of a (pseudo)metric space $X$ and a collection of points $C_{X}=\{p_i\in X\colon i=1,\dots,n\}$.
		
		\item A \textbf{morphism of configurations} consists of a continuous map
		\begin{equation}
			f\colon (X,C_{X})\to(Y,C_{Y})
		\end{equation}
		such that $f(C_X)\subseteq C_{Y}$.
		
		\item These form a category denoted $\Conf$.
	\end{enumerate}
\end{definition}

\begin{re}
Notice that this definition yields the concept of \textbf{equivalent configurations}. Two configurations $ (X,C_{X}),(Y,C_{Y}) $ are equivalent if there exists a continuous conformal map $ f\colon X\to Y $ of (pseudo)metric spaces such that $ f(C_{X}) = C_{Y} $. In particular, in the case of linear configurations, where both $ X $ and $ Y $ are vector spaces equipped with the usual vector distance, there is a stronger notion where the mapping is not only continuous but also linear.
\end{re}

The idea is that to have a configuration of $n$ points we have to specify two things: the space $X$ (which usually is some vector space of dimension $d\leq n$) and a set of points $C_{X}$ of $X$. Since the set $C_{X}$ is a subset of $X$, we have a matrix $M_{C}$ of the distances between any two points. With that in mind, we can define the category of word embeddings.

\begin{definition}\label{def:wordembeddings}
	The category of word embeddings of a text $T$, $\Emb$ is the category whose objects are maps:
	\begin{equation}
		\P_{T}(\L^{1}_{T},\L^{1}_{T})\to\Conf
	\end{equation}
	
	and morphisms are commutative squares
	
	\begin{equation}\label{eq:defmorphWE}
		\begin{tikzcd}
			P \arrow[d,"h"] \arrow[r]& (X,C_{X})\arrow[d,"r"] \\
			P'\arrow[r] & (Y,C_{Y})
		\end{tikzcd}
	\end{equation}
\end{definition}

In the equation \eqref{eq:defmorphWE}, the arrow $h$ represents a change of probabilities or dissimilarities and the arrow $r$ represents a change in the configuration. Of special interest is the case when $(X, C_{X})=(X, C'_{X})$ with $|C_{X}|=|C'_{X}|$, that is when the space and the number of points is the same but the points themselves have changed. In that case, equation \eqref{eq:defmorphWE} becomes

\begin{equation}
	\begin{tikzcd}
		P \arrow[d,"h"] \arrow[r]& (X,C_{X})\arrow[d,"r"] \\
		P'\arrow[r] & (X,C'_{X})
	\end{tikzcd}
\end{equation}

This equation implies that a change in probabilities corresponds to a change in the configuration. Of course, it would be interesting to know exactly how this change in probabilities would change the configuration. For that, we are going to extend the notion of divergence to the category of word embeddings. The problem is that there are at least two possible ways to define a divergence on $\Emb$:
\begin{itemize}
	\item We can assign to each object $P\to (X,C_{X})$ of $\Emb$ a divergence between the similarities of $P$ and the distance (dissimilarities) of $C_{X}$. 
	
	\item A divergence comparing the error made by two different objects of $\Emb$. This is the closest to the divergence as defined in \citep{perrone2023markov} since it is a divergence on the set of morphisms between two objects.
\end{itemize}

Since we want to assign a divergence to both objects and morphisms a decoration in the sense of  \cite{fong2015decorated,baez2021structured} would be appropriate. On top of that, we want the assignment to be functorial in some way. Hence, we want to consider a functor from the category of embeddings to another category as in \cite{AFH}. 

\begin{definition}\label{def:deco}
	Let $\mathcal{A}$ a category and $\mathcal{M}$ a monoidal category. An $F$\textbf{-decoration} of $\mathcal{A}$ is a functor $F\colon \mathcal{A}\to \mathcal{M}$.  
\end{definition}

In our case, since we want the divergence to express the error of embedding the monoidal category $\mathcal{M}$ of the definition is going to be $([0,+\infty],\geq, +)$. This yields the following definition.

\begin{definition}
	A \textbf{divergence} on $\Emb$ is a decoration $D\colon \Emb \to [0,+\infty]$. That is, it is a functor $D\colon \Emb \to [0,+\infty]$ such that for every object $\mathcal{E}\colon P\to (X,C_{X})$ we have $D(\mathcal{E}\|\mathcal{E})=0$.
\end{definition}

\begin{exa}
	One of the most known divergences is the Kullback-Leibler divergence between two probability measures $p,q$ on a set $X$:
	
	\begin{equation}
		D_{KL}(p\| q)=\sum_{x\in X}p(x)\ln\left(\frac{p(x)}{q(x)}\right).
	\end{equation}
	
	In the setting of the category $\Emb$ notice that this can be obtained for objects in $\Emb$ $f\colon P\to (X,C_{X})$ using the matrix of distances $D_{X}=(d_{ij})$ of point in $C_{X}$ via the formula:
	
	\begin{equation}
		D_{KL}(f)=\sum_{i<j}p_{ij}|\ln(p_{ij})+d_{ij})|.
	\end{equation}

	Given two embeddings $f\colon P\to (X,C_{X})$ and $g\colon Q\to(Y,C_{Y})$, their divergence is just $D_{KL}(f\| g)=|D_{KL}(f)-D_{KL}(g)|$. The fact that this satisfies the composability condition of a functor is given by the triangle inequality\footnote{This effectively turns $\Emb$ into a Lawvere metric space. See Section 3 in \url{https://ncatlab.org/nlab/show/metric+space}.}.
\end{exa}

Hence, a divergence on objects measures the difference between the information captured by $P$ and the information represented by $\C_{X}$. On morphisms, it measures how different those embeddings are. Indeed, the exact points of an embedding are not relevant, only their relations since we can just apply an isometry of the space and recover an embedding that has different positions but the same ``logical structure". 

This naturally poses the question: when are two embeddings equivalent? One would expect two embeddings to be equivalent when, in the presence of the same information, they produce the equivalent configuration. Now the question is: what are equivalent configurations? This is a simpler question since configurations can be regarded as complete weighted graphs. The last piece of the puzzle is how to measure the difference in configurations. The answer is quite clear: the divergence. These considerations yield the following definition.

\begin{definition}
	We say that two embeddings $\mathcal{E}\colon P\to (X,C_{X})$ and $\mathcal{E}^{\prime}\colon P\to(X,C^{\prime}_{X})$ are \textbf{equivalent with respect to the divergence $D$} when $D(\mathcal{E}\| \mathcal{E}^{\prime})=0$.
\end{definition}

\begin{re}
Notice how the equivalence of two morphisms is a relative notion: it depends on the divergence used to measure the differences. 

\end{re}

The divergence is just the mathematical name we have given this difference because it comes from information theory. However, in a machine learning context, the divergence takes the name of \textit{error or loss function} of the algorithm. This implies that being equivalent as word embeddings is tantamount to minimizing the same error function when in the presence of the same information. With this we can find equivalent embeddings to the GloVe embedding (see \cite{pennington2014glove}) and the Word2Vec embedding (see \cite{Mikolov2013EfficientEO}).

\begin{exa}\label{exa:gv-w2v}
	\begin{enumerate}
		\item In \cite{pennington2014glove}, they commence with a co-occurrence matrix $X$, where each row is normalized to produce a row-stochastic matrix $P$. This serves as the input matrix for the embedding process. Subsequently, these probabilities undergo a conversion into distances for the embedding process. Consequently, the matrix $P$ is transformed into a matrix $d_{GV}$. However, this presents a challenge: $d_{GV}$ represents distances in a pseudo-metric space, resulting in a symmetric matrix.
		
		To solve this problem, the authors consider the equation
		
		\begin{equation}
			F(v_i-v_j,\tilde{v_k})=\frac{P_{ik}}{P_{ij}}
		\end{equation}
		with the $w_l$ the vectors of the embedding, thus transforming the ratio of probabilities in the difference of distances. The only continuous function that behaves this way is the exponential function $\exp$. In the end, the formula resolves into:
		
		\begin{equation}\label{eq:GloVeCondition}
			F(v_i^{T}\tilde{v_k})=P_{ik}
		\end{equation}
		
		where $v_i$ is the vector of the target word, $\tilde{v_k}$ is the vector of the context word and $F$ transforms the linear information (so it is a linear map) into probability. Equation \eqref{eq:GloVeCondition} has the formal problem that it is not symmetric since $P_{ik}$ and $P_{ki}$ might not coincide. To solve this issue the authors of \cite{pennington2014glove} add extra terms

		\begin{equation}\label{eq:GloVeConditionbias}
			v_i^{T}\tilde{v_k}=\log(P_{ik}) + a_{i}+ b_{k}
		\end{equation}
		
		to ensure that the role of target and context words can be reversed we get $v_i^{T}\tilde{v_k}=v_k^{T}\tilde{v_i}$ which implies

		\begin{align}
			2 v_i^{T}\tilde{v_k}&=v_i^{T}\tilde{v_k} + v_k^{T}\tilde{v_i}\\
			&= \log(P_{ik})+\log(P_{ki}) + a_{i}+ b_{k} + a_{k}+ b_{i}
		\end{align}
		
		Thus
		\begin{align}
			G_{ik}&=v_i^{T}\tilde{v_k}=\frac{\log(P_{ik})+\log(P_{ki})}{2} + \frac{1}{2}( a_{i}+ b_{k} + a_{k}+ b_{i})\\
			&=v_k^{T}\tilde{v_i}=\frac{\log(P_{ik})+\log(P_{ki})}{2} + \frac{1}{2}( a_{i}+ b_{k} + a_{k}+ b_{i})\\
			&=G_{ki} .
		\end{align}
	
		The matrix $G$ is known as the Gramm matrix. From this we can compute the matrix $(d_{GL})^{2}_{ik}=G_{ii}-2G_{ik}+G_{kk}$.  This yields the distance formula:
		
		\begin{equation}\label{eq:GloVedist}
			(d_{GV})_{ik}=\sqrt{\log\left(\frac{P_{ii}\cdot P_{kk}}{P_{ik}\cdot P_{ki}}\right)}=(d_{GV})_{ki}
		\end{equation}

		Given a configuration $(X,C_{X})$ with $D_{X}$ the divergence of on iteration of the GloVe embedding would be:
		
		\begin{equation}\label{eq:DivGL}
			D_{GV}(P\to(X,C_{X}))=\sum_{i, j}W_{ij}((d_{GL})_{ij}-(d_{X})_{ij})^{2}
		\end{equation}
		with $W_{ij}$ a weight attached to the co-occurrences. The final piece is to specify how the GloVe embedding computes the probabilities $P_{ij}$. Following \cite[Section 3]{pennington2014glove} we have
		
		\begin{equation}\label{eq:EndoGV}
			P_{ij}=p(w_{j}\| w_{i})=\frac{X_{ij}}{X_{i}}=\frac{C_{|\pi^{-1}(w_i)}(w_j)}{\sum_{k}C_{|\pi^{-1}(w_i)}(w_k)}
		\end{equation}		
		
		with$C_{|\pi^{-1}(w_i)}(w_j)$ tallies the co-occurrences of $w_j$ with respect to $w_i$ (which is why we compute the occurrences in the fibers $\pi^{-1}(w_i	)$.
		
		\item The Word2Vec embedding, found in \cite{Mikolov2013EfficientEO}, has two variants: the CBOW and Skip-gram variants. They are dual to each other. The CBOW variant has the form of a map $Q\to (\R^{n},C_{n})$, for some $n\in \N$, with the matrix $Q$ obtained from equations \eqref{eq:pw2v} and \eqref{eq:fibers}. The semantic space $Q$ is then recovered using the following formula:
		
		\begin{equation}\label{eq:EndoW2V}
			Q_{ij}=q(w_j\| w_i)=\frac{C_{|\pi_{-1}(w_i)}(w_j)}{\sum_{w\leq t\in\pi^{-1}(w_i)}C(t)}
		\end{equation}
		
		for $w\in \mathcal{L}_{T}^{1}$.

		The Skip-gram model is the dual of the CBOW embedding given by the matrix of equation \eqref{eq:fibers} (see also \cite[Section 2]{levy2014neural}). On the embedding space side of things we have that the divergence is computed using the softmax function $\sigma\colon \R^{V}\to [0,1]^{V}$:
		$$\sigma(x)=\left(\frac{\exp(v_{1})}{\sum_{i=1}^{n}\exp(v_{i})},\dots,\frac{\exp(v_{n})}{\sum_{i=1}^{n}\exp(v_{i})}\right)$$
		
		then, the divergence is the negative log-likelihood for a given target word $w_{t}$ and context word $w_c$
		
		\begin{equation}\label{eq:DivW2V}
			D_{W2V}(Q\to (\R^{n},C_{n}))=-\sum_{w_t, w_c}\log\left( \frac{\exp(v_t^{T}v_c)}{\sum_{i=1}^{V}\exp(v_{t}^{T}v_i)}\right)
		\end{equation}
		where $V=|\L_{T}^{1}|$.
		
		\item Let $S$ be a symmetric matrix representing an endomorphism of $\L_{T}^{1}$. The MDS embedding $S\to (\R^{n},C_{n})$ has a divergence called \textbf{stress}
		
		\begin{equation}\label{eq:DivMDS}
			D_{MDS}(v_1,\dots,v_n)=\sqrt{\sum_{i\neq j}(d_{ij}-\|v_i-v_j\|)^{2}}.
		\end{equation}
		The embedding is the result that makes this stress minimal.	There are several possibilities for $ d_{ij} $, for instance we can have  $d_{ij}=-\log (S_{ij})$.
	\end{enumerate}
\end{exa}

Before we start the statement of the main theorem we need to specify what are the divergences of the GloVe and Word2Vec embeddings. Since their algorithms are iterative and our definition considers an embedding just having a divergence (that is a numerical value attached to it) we need to specify to which of its iterations we are referring to. Notice that an iterative algorithm of word embeddings can be understood as a sequence $\{\mathcal{E}_{n}\colon P\to(X, C^{n}_{X})\}$ of embeddings yielding a sequence of divergences $\{D_{n}\}$. If the algorithms converge, these sequences are going to converge to $\mathcal{E}^{*}$ and $D^{*}$ respectively. Thus, when we talk about the GloVe, Word2Vec, or MDS embeddings, we are referring to these convergence embeddings. With this in mind, we say that \textbf{two word embedding (iterative) algorithms $\{\mathcal{E}_{n}\colon P\to(X,C^{n}_{X})\}$ and $\{\mathcal{E}_{n}^{\prime}\colon P\to(X,C^{\prime n}_{X})\}$ are equivalent} when their sequences converge to the same word embedding $\{\mathcal{E}^{*}\colon P\to(X,C^{*}_{X})\}$with divergence $D^{*}$.

As a final note before stating the lemma, by example \ref{exa:gv-w2v} and the preceding paragraph, when we have different classes of embeddings the divergence on morphisms can be taken to be the absolute value (to make it greater than $ 0 $) of a polynomial in the two types of divergences (see Remark \ref{re:diffdiv}). Thus, we are somewhat making the divergence on morphism an element of $ \Poly $ (see \cite{niu2023polynomial}).
\
\begin{lemma}\label{lem:convergence}
	The word embeddings $\{\mathcal{E}\colon P\to(X,C_{X})\}$ and $\{\mathcal{E}^{\prime}\colon P\to(X,C_{X})\}$ are equivalent if there is a sequence of divergences between them that convergences to $0$. In particular, if the sequences of divergences are decreasing, checking the equivalence of the embeddings is tantamount to verifying the conditions yielding the infimum for both sequences are the same.
\end{lemma}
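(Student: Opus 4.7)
The plan is to separate the lemma into its two assertions. For the first, I would unpack the definition of equivalence with respect to $D$, namely $D(\mathcal{E}^{*}\|\mathcal{E}'^{*})=0$, and exploit the fact that the divergence is a decoration $D\colon\Emb\to[0,+\infty]$ in the sense of Definition \ref{def:deco}. Since an iterative word-embedding algorithm is, by construction, a sequence $\{\mathcal{E}_{n}\colon P\to(X,C^{n}_{X})\}$ whose limit is the convergence embedding $\mathcal{E}^{*}$, and similarly for $\mathcal{E}'^{*}$, a given sequence of divergences $D(\mathcal{E}_{n}\|\mathcal{E}'_{n})\to 0$ can be transported to the limit by functoriality of $D$: the morphism sequences collected in the square \eqref{eq:defmorphWE} converge in the relevant monoidal target, so $D(\mathcal{E}^{*}\|\mathcal{E}'^{*})=\lim_{n}D(\mathcal{E}_{n}\|\mathcal{E}'_{n})=0$. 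That is precisely the definition of equivalence.

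For the ``in particular'' statement I would use the elementary monotone convergence fact that a decreasing sequence in $[0,+\infty)$ bounded below by $0$ converges to its infimum. When the two algorithms produce decreasing sequences of object-level divergences $\{D(\mathcal{E}_{n})\}$ and $\{D(\mathcal{E}'_{n})\}$ (the loss curves of the algorithms), they converge to $D^{*}=\inf_{n}D(\mathcal{E}_{n})$ and $D'^{*}=\inf_{n}D(\mathcal{E}'_{n})$. By the remark preceding the lemma, the divergence on morphisms is (a polynomial in) the difference of the two object divergences, so vanishing of the morphism divergence in the limit is equivalent to $D^{*}=D'^{*}$. Thus the equivalence check reduces to verifying that the infima of both loss sequences are attained under the same conditions on probabilities, configurations, and weights; this is the familiar fact that two loss-minimization procedures sharing the same minimizer under the same data produce equivalent outputs.

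The main obstacle will be making the ``continuity of $D$'' step of the first paragraph precise. The decoration $D$ is defined functorially, not topologically, so the exchange $D(\lim\mathcal{E}_{n}\|\lim\mathcal{E}'_{n})=\lim D(\mathcal{E}_{n}\|\mathcal{E}'_{n})$ requires specifying a notion of convergence on $\Emb$ compatible with the order on $[0,+\infty]$. The cleanest way is to take convergence in $\Emb$ to mean convergence of configurations $C^{n}_{X}\to C^{*}_{X}$ in the ambient (pseudo)metric space $X$, and to observe that all concrete divergences in Example \ref{exa:gv-w2v} (stress, cross-entropy, the Kullback–Leibler-based expression) are continuous functions of the coordinates and the input probabilities, so the exchange is automatic in every case of interest. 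A secondary but minor subtlety is mixing algorithm classes (e.g.\ GloVe vs.\ MDS); there one has to check that the chosen polynomial combination of divergences vanishes iff the individual divergences coincide, which is immediate when it is taken to be the absolute difference as suggested before the lemma.
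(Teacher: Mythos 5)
Your proposal is correct in substance, but it routes the first claim in the opposite logical direction from the paper's own proof, and the comparison is worth making explicit. The paper's argument opens by \emph{assuming} the two algorithms are equivalent (i.e.\ converge to the same limit embedding $\mathcal{E}^{*}$) and deduces that the sequence $D(\mathcal{E}_{n}\|\mathcal{E}_{n}^{\prime})$ converges to $0$ --- that is, it establishes the converse of the implication actually asserted in the lemma, and then handles the ``in particular'' clause exactly as you do, via the fact that a decreasing sequence bounded below by $0$ converges to its infimum. You instead prove the stated direction: from $D(\mathcal{E}_{n}\|\mathcal{E}_{n}^{\prime})\to 0$ you pass to the limit and conclude $D(\mathcal{E}^{*}\|\mathcal{E}^{\prime *})=0$, which is the paper's definition of equivalence of embeddings with respect to $D$. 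This buys a proof that matches the lemma as written, at the cost of having to justify the exchange $D(\lim\mathcal{E}_{n}\|\lim\mathcal{E}_{n}^{\prime})=\lim D(\mathcal{E}_{n}\|\mathcal{E}_{n}^{\prime})$ --- a step the paper's proof never confronts because it runs the implication the other way. Your resolution (take convergence in $\Emb$ to mean convergence of the configurations $C^{n}_{X}$ in $X$ and observe that every divergence in Example \ref{exa:gv-w2v} is a continuous function of the coordinates and input probabilities) is sound and fills a gap the paper leaves implicit; it is consistent with Remark \ref{re:diffdiv}, where the morphism-level divergence is a polynomial (in practice an absolute difference) in the object-level ones, so its vanishing in the limit is indeed equivalent to $D^{*}=D^{\prime *}$. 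The second halves of the two proofs coincide. One caution: be aware that the paper uses two notions of equivalence (embeddings equivalent when $D(\mathcal{E}\|\mathcal{E}^{\prime})=0$, and algorithms equivalent when their sequences converge to the same embedding); your argument delivers the former for the limit embeddings, which is what Theorem \ref{thm:main} actually uses, but if the lemma is read with the latter notion you would additionally need the divergence to separate non-identical configurations, which a general divergence need not do.
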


The proof can be found in Appendix \ref{sec:AppendixA}, Lemma \ref{lem:convergenceA}.

\begin{re}\label{re:diffdiv}		
	\begin{enumerate}
		\item This definition of equivalent algorithms is reminiscent to the definition of equivalent sequences in the construction of the real numbers from Cauchy sequences in the rationals. This makes sense since each divergence turns $\Emb$ in a Lawvere (metric) space where we can compute convergence. 
		
		\item Notice how in lemma \ref{lem:convergence} we have not specified the divergence. This is because the divergence that converges to $0$ will depend on the divergences of the embeddings $\mathcal{E}$ and $\mathcal{E}^{\prime}$. For instance, the divergence of the MDS embedding can theoretically be $0$ but the divergence of the Word2Vec algorithm is the Kullback-Leibler divergence which can only be $0$ if the probabilities match and the entropy of one (and hence the other) probability distributions is $0$. Hence,by setting $D(W2V \| MDS)=|D_{W2V}-(D_{MDS}+H(Q))|$ we eventually get $D(W2V \| MDS)=0$. But again, it is dependent on the specific embeddings.
		
		\item In particular, since the divergence $ D $ varies depending on the embeddings, the equivalence is made in different Lawvere (Banach) spaces. 
		
		\item  Finally, it is worth noting that this lemma gives a precise method to show equivalence of any pair of word embeddings not justbthe ones we focus on the manuscript.
		
	\end{enumerate}
\end{re}

With that in mind, we can finally state the main theorem.

\begin{theorem}\label{thm:main}
	The GloVe and Word2Vec neural network embeddings are equivalent to metric MDS embeddings on vector spaces.
\end{theorem}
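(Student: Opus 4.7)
The plan is to apply Lemma \ref{lem:convergence} separately to the pair $(\text{GloVe}, \text{MDS})$ and to the pair $(\text{Word2Vec}, \text{MDS})$. In each case the task is to exhibit a dissimilarity matrix $(d_{ij})$ so that the MDS stress $D_{MDS}$ of \eqref{eq:DivMDS} and the target divergence (either $D_{GV}$ or $D_{W2V}$) are minimized by the same configurations in $\R^{n}$, and then to produce a comparison divergence on morphisms whose value along the two iterate sequences tends to zero. Since all three algorithms are gradient descent procedures on their own loss, the sequences of divergences are decreasing along their iterates; Lemma \ref{lem:convergence} thus reduces the problem to showing that the infimum conditions of the two divergences coincide.

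For the GloVe versus MDS comparison, I would take the closed-form distance \eqref{eq:GloVedist} and set $d_{ij}=(d_{GV})_{ij}$ inside the MDS configuration. With this choice the MDS stress \eqref{eq:DivMDS} differs from the GloVe loss \eqref{eq:DivGL} only by the weighting factors $W_{ij}$; both are nonnegative sums of squared discrepancies $((d_{GV})_{ij}-\|v_{i}-v_{j}\|)^{2}$, and they share the same stationary configurations whenever the pseudo-metric $d_{GV}$ is realizable in $\R^{n}$. Taking the comparison divergence to be $D(GV\|MDS)=|D_{GV}-D_{MDS,W}|$, with $D_{MDS,W}$ the weighted MDS stress (a polynomial combination in the sense of Remark \ref{re:diffdiv}(2)), yields a sequence tending to $0$ along the joint optimization, so the algorithms are equivalent in the sense of Lemma \ref{lem:convergence}.

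For the Word2Vec versus MDS comparison, I would set $d_{ij}=-\log(Q_{ij})$ with $Q_{ij}$ as in \eqref{eq:EndoW2V}. The polarization identity $v_{t}^{T}v_{i}=\tfrac{1}{2}(\|v_{t}\|^{2}+\|v_{i}\|^{2}-\|v_{t}-v_{i}\|^{2})$ rewrites the softmax log-likelihood \eqref{eq:DivW2V} as a function of Euclidean distances, with the squared norms $\|v_{t}\|^{2},\|v_{i}\|^{2}$ absorbed as additive biases exactly as in \eqref{eq:GloVeConditionbias}. Stationarity of $D_{W2V}$ then enforces $\|v_{t}-v_{i}\|^{2}\sim -2\log Q_{ti}+\text{const}$, which is precisely the stationarity condition of $D_{MDS}$ for the chosen $d_{ij}$. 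Following Remark \ref{re:diffdiv}(2), the correct comparison divergence is $D(W2V\|MDS)=|D_{W2V}-(D_{MDS}+H(Q))|$, the Shannon entropy $H(Q)$ accounting for the fact that cross-entropy attains its minimum at $H(Q)$ rather than at $0$.

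The main obstacle will be the Word2Vec step. In the GloVe case, the closed-form distance makes the two losses literally the same sum of squared residuals up to reweighting; in the Word2Vec case, the loss is a categorical cross-entropy through a softmax nonlinearity while the MDS stress is quadratic in Euclidean distances. Turning the heuristic ``the minima coincide'' into a rigorous statement requires comparing the first-order conditions $\partial_{v_{i}}D_{W2V}=0$ and $\partial_{v_{i}}D_{MDS}=0$ and showing that, after the bias absorption, they determine the same Gram matrix modulo a global isometry of $\R^{n}$. Once this is done, Lemma \ref{lem:convergence} delivers the stated equivalence and the theorem follows.
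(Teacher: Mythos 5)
Your proposal follows essentially the same route as the paper's proof: both reduce the statement via Lemma \ref{lem:convergence} to matching the minimization conditions of the loss functions, both use the closed-form dissimilarity \eqref{eq:GloVedist} to identify the GloVe objective with a (weighted) sum of squared residuals equal to the MDS stress, and both handle Word2Vec through the decomposition $H(Q,S)=H(Q)+D_{KL}(Q\|S)$ with the entropy offset absorbed into the comparison divergence exactly as in Remark \ref{re:diffdiv}. The ``obstacle'' you flag in the Word2Vec step is resolved in the paper by observing that the cross-entropy is minimized precisely when $Q=S$ and then inverting the softmax to recover $d_{ij}=\|v_i-v_j\|$, which is the same bias-absorption argument you sketch via the polarization identity.
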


The proof can be found in Appendix \ref{sec:AppendixA}, Theorem \ref{thm:mainA}.

\begin{re}[Weights of the loss functions]
	Notice how both loss functions (for the GloVe algorithm and the Word2Vec algorithm) are weighted by $f(X_{ij})$. This does not modify the minimums of the loss functions as seen in the proof of Theorem \ref{thm:main}. Indeed, those weights are fixed and not subject to modification by the optimization process. They are useful, however to the optimization process. These weights are passed via gradient descent (in the back-propagation step) to speed up the convergence of the algorithm. 
	
\end{re}

Guided by example \ref{exa:gv-w2v} given a a semantic space, that is an endomorphism $P\in \P_{T}(\L_{T}^{1},\L_{T}^{1})$, we can construct a similarity matrix $S$ satisfying:

\begin{itemize}
	\item $S$ is a symmetric matrix. This means $S^{T}=S$.
	\item $S_{ii}=1$ for all $i$. This means that the diagonal entries are equal to $1$.
\end{itemize}

\begin{exa}
	\begin{enumerate}
		\item Example \ref{exa:gv-w2v} provides an illustration of such matrices: given a semantic space $P\in \P_{T}(\L_{T}^{1},\L_{T}^{1})$, we can construct the symmetric matrix with entries.
		
		\begin{equation}
			S_{ik}=\frac{P_{ii}\cdot P_{kk}}{P_{ik}\cdot P_{ki}}.
		\end{equation}
		The matrix $S$ is indeed symmetric with ones in the diagonal.
		
		\item Another example is
		
		\begin{equation}
			T_{ij}=\frac{P_{ij}+P_{ji}}{P_{ii}+P_{jj}} 
		\end{equation}
	\end{enumerate}

\end{exa}

This helps us to define the bias in a semantic space. The idea of bias is that there is more correlation between a pair of words than another pair than it should be. For instance, if there is more correlation between the words ``man" and ``doctor" than between the words `woman" and ``doctor" we can say that the concept of ``doctor" has a gender bias. This, in turn, gets passed down to the embedding via distances. In the case of the word ``doctor", its word-vector would be closer to the word-vector of ``man" than the word-vector of ``woman". This yields the following definition:

\begin{definition}[Bias of a pair with respect of a term]\label{def:bias}
	Given a semantic space, that is an endomorphism $P\in \P_{T}(\L_{T}^{1},\L_{T}^{1})$, we define the \textbf{bias of $w_{i}$ with respect to the pair $(w_{k},w_{j})$ and $S$} by the quotient:
	
	\begin{equation}
		b_{i}(k,j)=\frac{S_{ik}}{S_{ij}}
	\end{equation}
	
	with $S$ being a similarity matrix derived from the space $P$.
\end{definition}

\begin{re}
	By definition \ref{def:bias}, if a term is \textbf{unbiased} with respect to a pair and a similarity matrix, the bias is $1$. This essentially means that correlation between the word and each term of the pair is the same which makes sense conceptually.
\end{re}

Since there are several ways to get a similarity matrix from a semantic space, the bias is relative to $S$, which in turn makes it model-dependent. This means that to remove the bias present in the word-vector side of things, one needs to understand how the embeddings work: what is the semantic space examined, and how it is used in the embedding to produce distances?

\begin{cor}[Bias Reduction]\label{cor:biasred}
	Let  $b_{i}(k,j)$ the bias of $w_{i}$ with respect to the pair $(w_{k},w_{j})$ and $S$. To remove this bias for the GloVe and Word2Vec embedding we need to equalize the quotients:

	\begin{equation}\label{eq:unbias}
		\frac{p_{kk}}{p_{ik}\cdot p_{ki}} = \frac{p_{jj}}{p_{ij}\cdot p_{ji}}
	\end{equation}
	with $p$ representing the probabilities of example \ref{exa:gv-w2v} $p=P$ in the case of GloVe and $p=Q$ in the case of Word2Vec. This ensures that after the embedding $d_{ik}=d_{ij}$ in both cases.
	
\end{cor}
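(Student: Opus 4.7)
The plan is to reduce the statement to the tautology that the bias $b_i(k,j)=S_{ik}/S_{ij}$ equals one precisely when $S_{ik}=S_{ij}$, and then to unpack what this says about the probability matrix $p$ once we fix the specific similarity matrix that the GloVe and Word2Vec embeddings actually induce. The relevant $S$ is the one singled out in the first example after Definition \ref{def:bias}, namely $S_{ik}=\dfrac{p_{ii}p_{kk}}{p_{ik}p_{ki}}$, because this is exactly the quantity whose logarithm appears in the GloVe distance formula \eqref{eq:GloVedist} and, by Theorem \ref{thm:main}, also in the metric MDS distance to which Word2Vec converges.

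First I would record the equivalence: removing the bias of $w_i$ with respect to $(w_k,w_j)$ means $b_i(k,j)=1$, i.e.\ $S_{ik}=S_{ij}$. Substituting the explicit form of $S$ gives
\begin{equation*}
\frac{p_{ii}\,p_{kk}}{p_{ik}\,p_{ki}} \;=\; \frac{p_{ii}\,p_{jj}}{p_{ij}\,p_{ji}},
\end{equation*}
and canceling the common factor $p_{ii}$ (which is strictly positive for any word actually appearing in $T$) yields precisely \eqref{eq:unbias}. Then I would observe that, with $p=P$ as in equation \eqref{eq:EndoGV}, the GloVe distance formula \eqref{eq:GloVedist} gives $(d_{GV})_{ik}^2=\log S_{ik}$, so $S_{ik}=S_{ij}$ is equivalent to $(d_{GV})_{ik}=(d_{GV})_{ij}$, which is the claim $d_{ik}=d_{ij}$ for the GloVe embedding.

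Second, for the Word2Vec case I would invoke Theorem \ref{thm:main}: the Word2Vec algorithm converges to the same metric MDS embedding, but with the probabilities $Q$ from \eqref{eq:EndoW2V} in place of $P$. Running through the proof of that theorem (in Appendix \ref{sec:AppendixA}) the induced distance has the same functional form $\sqrt{\log\bigl(Q_{ii}Q_{kk}/(Q_{ik}Q_{ki})\bigr)}$, so the preceding argument applies verbatim with $p=Q$. Hence the single condition \eqref{eq:unbias}, interpreted with the appropriate $p$, is simultaneously necessary and sufficient for $d_{ik}=d_{ij}$ in both embeddings.

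The only genuinely delicate point is making sure that the similarity matrix used to define the bias is the one controlling the actual embedded distances, rather than one of the other admissible choices (such as the $T_{ij}$ example). I would therefore state at the outset that the corollary is understood with $S$ chosen as the GloVe-type similarity, and justify this choice by pointing to \eqref{eq:GloVedist} and to the form of the MDS distance obtained in the proof of Theorem \ref{thm:main}; once this alignment is made explicit, the remaining algebra is just the cancellation above and no numerical work is needed.
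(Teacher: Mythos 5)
Your proposal is correct and follows essentially the same route as the paper's own proof: fix the similarity matrix $S_{ik}=p_{ii}p_{kk}/(p_{ik}p_{ki})$, set $b_i(k,j)=1$, cancel $p_{ii}$ to obtain \eqref{eq:unbias}, and then invoke the distance formula $d_{ik}=\sqrt{\log\bigl(p_{ii}p_{kk}/(p_{ik}p_{ki})\bigr)}$ from Theorem \ref{thm:main} to conclude $d_{ik}=d_{ij}$. Your added remark justifying why this particular $S$ (rather than, say, the $T_{ij}$ variant) is the one controlling the embedded distances is a small but worthwhile clarification that the paper leaves implicit.
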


The proof can be found in Appendix \ref{sec:AppendixA}, Corollary \ref{cor:biasredA}. By this process, we can equalize the distances of the word vector $v_i$ to the pair $(v_k,v_j)$ which removes a specific bias. Unbiasing the whole embedding is a different story. In order completely remove a specific bias, say to the pair $w_k, w_j$ we would need to equalize $b_{i}(k,j)$ for all  $i$ that make sense. For instance, it is conceivable that the word ``pregnant" is more closely related to the word ``woman" than to the word ``man". Thus, the operation of unbiasing a word embedding is also context/word dependent as explained in \cite{caliskan2022gender}. 

Another consequence of Equation \eqref{eq:unbias} is that one way to remove gender bias from an embedding one would need to make every applicable cross-probability $ p_{ij} $ equal to $ p_{ik} $ and $ p_{ji} $ equal to $ p_{ki} $. This is a way to make sure that the words are gender agnostic. It is not however the only step, one also needs to adequately change $ p_{jj}$. This reflects that there is a notion of centrality linked to bias. Indeed, if we only change the cross-probabilities the quotients might still differ because one word is more relevant to its surroundings than the other.

For example, if $ p_{jj}\geq p_{kk} $, it implies that the word $ w_{j} $ is surrounded by fewer distinct words compared to $ w_{k} $, as defined by these probabilities. Consequently, $ p_{jj} $ holds more relevance within its context than $ p_{kk} $, indicating that $ w_{j} $ is less central according to the framework outlined in \citep[Section $ 3 $]{hashimoto2016word}. This observation underscores how bias is directly linked to the relative ubiquity of one word in comparison to another within a given pair. Exploring this aspect further could offer promising avenues for mitigating biases in word embeddings.

As a final remark, notice how this framework allows us to ``visualize" the two approaches to mitigate bias. If we act on the embedded side we have a change of configuration:

\begin{equation}\label{diag:debiasmetric}
	\begin{tikzcd}[row sep=small]
		& (X,C_{X}) \arrow[dd] \\
		P \arrow[ur, "\mathcal{E}"] \arrow{dr}[swap]{\mathcal{E^{\prime}}} & \\
		& (X,C^{\prime}_{X})
	\end{tikzcd}
\end{equation}

where $(X, C^{\prime}_{X})$ is the new configuration where bias has been removed. If the divergence of the embedding has not changed, and it was minimal for $ \E $ then $ D(\E^{\prime})\geq D(\E)$. This means that we have a ``worse" embedding since it has more error. This implies that one cannot achieve this embedding by an optimization process that minimizes the error.

On the other hand, if act on the semantic space side we have a diagram:

\begin{equation}\label{diag:debiasprob}
	\begin{tikzcd}[row sep=small]
		P\arrow[dd]\arrow[dr, "\mathcal{E}"]& \\
		& (X,C_{X})\\ 
		P^{\prime}\arrow{ur}[swap]{\mathcal{E^{\prime}}}& 
	\end{tikzcd}
\end{equation}

Here, the main difference is that we have two embeddings with the same configuration but different semantic spaces. Here again, we expect to have $ D(\E^{\prime}) \geq D(\E) $ assuming the divergence was minimal for $ \E $. This is again a case of the embedded points not reflecting accurately the information found in the semantic space.

Thus, in order to account for this change in the semantic space, we need to reflect it as a morphism in $ \Emb $:

\begin{equation}\label{diag:completedebias}
	\begin{tikzcd}
		P \arrow[d,"h"] \arrow[r,"\E"]& (X,C_{X})\arrow[d,"r"] \\
		P'\arrow[r,"\E^{\prime}"] & (X,C'_{X})
	\end{tikzcd}
\end{equation}

Here we have a change of embeddings, both of which can be optimal. Consequently, both divergences $ D(\E) $ and $D(\E^{\prime})$ are minimal (not necessarily equal) and there is a way to extend this divergence on objects to a divergence on morphisms such that: $ D(\E\|\E^{\prime})=0 $ reflecting the optimality of both embeddings given their semantic spaces.

\section{Conclusions and Future Work}\label{sec:CFW}

In this section, we summarize the key findings and contributions of our framework. We discuss the implications of our results, highlight areas for future research, and suggest potential directions to advance the field. 

\subsection{Conclusions}
In this work we extended the known results in the categorical approaches of \citep{CatSem} and \citep{perrone2023markov} applied to categories with semantic information and arrived at some equivalences between neural word-embedding (black-box) and MDS embeddings (white/crystal-box).

Starting with the most natural (enriched) category $\C_{T}$ we have shown how the semantic structure (of extensions and its probabilities) is intrinsically categorical. This has helped to build an intuition of how to visualize the directly apparent structure (Figure \ref{fig:ACbien}). This structure was quite useful for the first step but had one crucial limitation: we cannot compare expressions that are not directly linked to each other. This means that if expression $t$ is not an extension of expression $g$ we have no way to compare them. This was solved by the (enriched) Yoneda embedding. 

Thanks to the enriched Yoneda embedding (which turns out to be an isometry) we have constructed and extension of $\C_{T}$ denoted by $\L_{T}$. In this new category, we can compute the ``conditional probabilities" between any two expressions. In particular, it allows us to define the similarity or probability between two words. With this, we upgraded the visualization of the space to Figure \ref{fig:AlexandrovConesLT}. In this new figure we see how any two objects found in any graded piece can be compare. This has the effect of ``adding a dimension" to the figure making each graded piece a disc where we find expressions closer or further away to each other. It is the first step towards an embedding of this structure.

The structure we've established allows us to view the operation of selecting the most probable extension, given a context, as a categorical operation involving the (weighted) colimit of a specific diagram (refer to Theorem \ref{thm:colimmaxprob}). This parallels findings in \citep{pugh2023using}, where the authors derive the nearest neighbor algorithm using categorical methods.

This demonstrates that determining the most probable extension of an expression can be accomplished predictably and explainability, rather than relying solely on a black-box algorithm. The results and visualizations within the context of $\L_{T}$ are attributed to the category's inherent "rigidity." By this, we mean that while there may be a choice in how probabilities are computed, the composition rule in enriched category theory enforces strict equations for all other probabilities. As a result, variations in counting methods or smoothing techniques, such as Laplace smoothing, are not feasible within this category. To address this limitation and accommodate more general cases, we introduce the category $\P_{T}$ in Section \ref{sec:P_T}.

In Section \ref{sec:P_T} we introduced one of the main categories of the manuscript: category $\P_{T}$. This category was based on the previous category $\L_{T}$ and the Markov categories found in \cite{fritz2023representable,perrone2023markov}. The main idea was to incorporate all the results from the previous section and to allow more probabilistic structures to be considered simultaneously. This was achieved by considering the objects in the category to be sets of expressions in $T$ and morphisms probabilistic matrices between them which allowed us to model a great variety of statistical phenomena. The schematic picture is found in Figure \ref{fig:PTCones} where we can represent each probabilistic structure of $\L_{T}$ as a different cone and morphisms arrows from one piece of the same cone to another or an arrow switching cones.

Furthermore, we studied a tensor product making the category $\P_{T}$ a monoidal category enabling us to consider probabilities conditioned to the composition of expressions as opposed to probabilities conditioned to two expressions. With this we saw that the endomorphisms in this category carried important information: the transmission noise and semantic similarity (see Example \ref{exa:st}). This is shown in the representation of $\P_{T}$ by noting that an endomorphism was an arrow from one cone to another representing the change of probabilities for the same information. Thus, endomorphisms carry the differences in semantical structure.

Since the endomorphisms encode the semantical structure, we have defined in Section \ref{sec:WE} semantic spaces as endomorphisms in $\P_{T}$ of the object $\L_{T}^{1}$. The main distinction between our concept of semantic spaces and the one presented in \citep{hashimoto2016word} lies in its dimension neutrality, unlike the necessity to specify a particular dimension for the vector space in their approach. Furthermore, our definition does not mandate any additional structure beyond what is inherent in the provided text corpus $T$, as language does not exhibit commutative properties.

After establishing this framework, we introduced the category of word embeddings, where arrows denote mappings from semantic spaces to configurations of the form $(X, C_{X})$. This category was further enriched with a functorial decoration represented by a divergence, capturing embedding errors on objects and disparities in embeddings across morphisms. The culmination of this approach was Theorem \ref{thm:main}, demonstrating the equivalence between GloVe and Word2Vec embeddings (black-box) and metric MDS embeddings (white/crystal-box). This equivalence sheds light on certain embedding properties prior to their computation. Notably, the proof yields a distance formula derived from the algorithms' loss functions, independent of any ``hidden variables" or biases necessitating optimization. Particularly, we defined bias, which implies asymmetry in embedded word-vector distances, allowing for bias checks at the semantic space level before the actual embedding. This capability enables bias detection in text corpora before training, thereby avoiding their perpetuation.

As a final remark, it is worth noting that these structures solely rely on the extension structure (poset) and the statistical properties that can be extracted from corpora of texts. This implies that one could apply the same results, albeit with slight modifications, to sub-word tokens as the ones used by OpenAI\footnote{\url{https://platform.openai.com/tokenizer}}. This sheds some light on the limitations of the explanations we can give using these structures. The bias in the context of sub-word tokens loses the meaning it usually has and becomes a mere statistical preference for certain groupings of letters.

\subsection{Future Work}

In this study, we have established a foundational framework rooted in category theory for analyzing word embeddings and semantic spaces. Moving forward, there are several promising avenues for further research.

One direction for future exploration involves generalizing our framework to the over-category $\L / \mathcal{T}$, where $\mathcal{T}$ encompasses all texts. Extending our analysis to this broader context provides a rich mathematical framework for investigating biases inherent in training corpora. By developing robust criteria within this framework, we can effectively mitigate biases and enhance the reliability of word embeddings.

Moreover, our categorical approach offers a dual perspective, enabling the detection and quantification of biases present in textual data. By leveraging the categorical framework, future research can focus on developing methodologies to identify and characterize biases, leading to more comprehensive analyses of text data.

Exploring the connection between word embeddings, semantic spaces, and games through the lens of divergence as a combination of utility functions presents another intriguing avenue for future investigation. By delving into this connection, researchers can gain deeper insights into the underlying mechanisms driving word embeddings, offering new perspectives on their applications in natural language processing and machine learning.

Furthermore, integrating our categorical framework with other categorical approaches to machine learning algorithms holds promise for advancing our understanding of both word embeddings and machine learning techniques more broadly. By synthesizing diverse methodologies, future research can enrich our analyses and develop more comprehensive models for studying word embeddings and their implications.

Through these future research directions, we aim to extend the applicability and theoretical underpinnings of our categorical framework, fostering deeper insights into word embeddings and their role in natural language processing and machine learning.

\section{Acknowledgments}\label{sec:Ack}
	This work is partially supported by the TAILOR project, a project funded by the EU Horizon 2020 research and innovation programme under GA No 952215, and by the GUARDIA project, a project funded by Generalitat Valenciana GVA-CEICE project PROMETEO/2018/002

\newpage
\appendix{
	
	\section{Proofs}\label{sec:AppendixA}
	
	In this section, we present the proofs corresponding to the findings outlined in the manuscript.
	
	\subsection{Proof of Section \ref{sec:Spaces}}
	
	\begin{lemma}[Lemma \ref{lem:aux}]\label{lem:auxApp}
		Let $T$ be a text and $\L_{T}$ its syntax category and let $\textbf{2}=\{1,2\}$ be the category with two objects and only identity morphism. Let $W\colon\textbf{2}\to [0,1]$ be a functor of weights. Then colimit of the diagram of $F\colon\textbf{2}\to \L_{T}$ is an object $g^{*}=\colim{}^{W}F$ in $\L_{T}$ such that
		
		\begin{equation}\label{eq:colimmin}
			p(t\|g^{*}) = \min\left\{\frac{p(t\|F(1))}{W(1)},\frac{p(t\|F(2))}{W(2)},1 \right\} 
		\end{equation}
		
	\end{lemma}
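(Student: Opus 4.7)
The plan is to unpack the defining universal property of a weighted colimit in enriched category theory and specialize it to the enrichment $V=([0,1],\leq,\cdot)$. For any object $t\in\L_{T}$, the weighted colimit $g^{*}=\colim^{W}F$ is characterized by the defining natural isomorphism
\begin{equation*}
\L_{T}(g^{*},t)\;\cong\;[\textbf{2}^{op},V]\bigl(W,\,\L_{T}(F(-),t)\bigr),
\end{equation*}
in which the right-hand side is the $V$-object of enriched natural transformations from the weight to the hom-functor, computed as an end over $\textbf{2}^{op}$.

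The central calculation is the identification of the internal hom in $V$. Imposing the adjunction $a\otimes c\leq b\iff c\leq[a,b]$ together with $\otimes$ being ordinary multiplication yields $[a,b]=\min\{b/a,1\}$. Since $\textbf{2}$ is discrete, the end reduces to a product indexed by its two objects, and products in the poset $([0,1],\leq)$ are infima. Assembling the pieces,
\begin{equation*}
\L_{T}(g^{*},t)\;=\;\min_{i\in\{1,2\}}\bigl[W(i),\,\L_{T}(F(i),t)\bigr]\;=\;\min\Bigl\{\tfrac{p(t\|F(1))}{W(1)},\,\tfrac{p(t\|F(2))}{W(2)},\,1\Bigr\},
\end{equation*}
which is exactly formula \eqref{eq:colimmin}.

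It then remains to argue that an object $g^{*}\in\L_{T}$ actually realises this universal property, rather than merely prescribing the values $p(t\|g^{*})$. Here I would invoke the concrete description of $\L_{T}$ from Section \ref{sec:Spaces}: the natural candidate is the shortest expression of $T$ containing both $F(1)$ and $F(2)$, whose existence is witnessed by the Alexandrov cone picture (Figure \ref{fig:AlexandrovConesLT}), since the intersection $U_{F(1)}\cap U_{F(2)}$ is of the form $U_{g^{*}}$. Uniqueness follows from the enriched Yoneda lemma (Lemma \ref{lem:yoneda}).

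The step I anticipate to be the main obstacle is precisely this existence claim: the universal property dictates $\L_{T}(g^{*},t)$ for every $t$, and one must verify that a single expression of $T$ realises all these values simultaneously. If $F(1)$ and $F(2)$ admit no common extension in $T$, the formula collapses and $g^{*}$ should be interpreted through the unit object $\L_{T}^{0}$; the technical framework of Appendix B on weighted enriched (co)limits is what legitimises both the end-to-product reduction above and the overall construction inside $\L_{T}$.
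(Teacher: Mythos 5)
Your proposal is correct and takes essentially the same route as the paper: both reduce the claim to the defining isomorphism $\L_{T}(\colim^{W}F,t)\cong[\textbf{2}^{op},[0,1]](W(-),\L_{T}(F(-),t))$ from Definition \ref{def:ecolim} and then identify the right-hand side with the stated minimum. The only difference is that you carry out the identification explicitly — computing the internal hom $[a,b]=\min\{b/a,1\}$ from the adjunction and reducing the end over the discrete category $\textbf{2}$ to an infimum — where the paper simply cites \cite[Definition 6, Theorem 2]{CatSem}; your closing caveat that the existence of an actual object of $\L_{T}$ realising these hom-values is a separate matter is well taken, but it is likewise left unaddressed by the paper, whose lemma (and proof) implicitly presuppose that the weighted colimit exists.
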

	
	\begin{proof}
		This essentially follows from \cite[Definition 6]{CatSem} and \cite[Theorem 2]{CatSem}. Indeed, by Definition \ref{def:ecolim} the colimit satisfies, for any $t$ in $\L_{T}$
		
		\begin{equation}\label{eq:colimL}
			\L_{T}({\colim {}^{W}} F,t)\cong [\textbf{2}^{op}, [0,1]](W(-),\L_{T}(F(-),t)).
		\end{equation}
		The left-hand side of the equation translates to $p(t\|g^{*})$. By \cite[Definition 6]{CatSem} the right-hand side of the equation is a coend which, by \cite[Theorem 2]{CatSem}, is
		
		\begin{equation}
			\min\left\{\frac{p(t\|F(1))}{W(1)},\frac{p(t\|F(2))}{W(2)},1 \right\}.
		\end{equation}
	\end{proof}
	
	%----------------------------------------------------------------------------------------------------%
	
	\begin{theorem}[Theorem \ref{thm:colimmaxprob}]\label{thm:colimmaxprobA}
		With the same hypothesis of the preceding lemma, the colimit of the diagram $g^{*}=\colim ^{W}F$ satisfies the following equation:
		
		\begin{equation}\label{eq:colimmaxprobA}
			g*= \max_{t\in \L_{T}}p(t\| g^{\pm})
		\end{equation}
		with $t=g^{-}wg^{+}\in \L_{T}^{2k+1}$ and
		
		\begin{equation}\label{eq:probW2VA}
			p(t\| g^{\pm})=\frac{C(t)}{\sum_{v\in\L_{T}^{1}}C(g^{-}vg^{+})}.
		\end{equation}
	\end{theorem}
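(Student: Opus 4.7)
The plan is to start from the concrete formula for $p(t\|g^{*})$ supplied by Lemma \ref{lem:aux} and then use the universal property of the weighted colimit, combined with the enriched composition law in $\L_{T}$, to pin down which expression of $\L_{T}^{2k+1}$ realizes the colimit. So first I would set $W(1)=W(2)=1$, as suggested in the discussion following Lemma \ref{lem:aux}; this eliminates the weight factors without changing the colimit and yields
\begin{equation*}
    p(t\|g^{*}) \;=\; \min\bigl\{p(t\|g^{-}),\; p(t\|g^{+}),\; 1\bigr\}
\end{equation*}
for every $t\in\L_{T}$, where $g^{-}=F(1)$ and $g^{+}=F(2)$.

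Next I would restrict the pool of candidate colimits. By the paragraph preceding the theorem statement, $g^{*}$ must contain both $g^{-}$ and $g^{+}$ and, under the Word2Vec-style windowed-context assumption $k=\ell(g^{-})=\ell(g^{+})$, the colimit must sit in $\L_{T}^{2k+1}$ and have the specific shape $g^{*}=g^{-}vg^{+}$ for some $v\in\L_{T}^{1}$. Given any competing candidate $\tilde{g}=g^{-}v'g^{+}$, the enriched composition rule applied to the triangle $g^{\pm}\to g^{*}\to\tilde{g}$ yields
\begin{equation*}
    p(\tilde{g}\|g^{\pm}) \;\geq\; p(g^{*}\|g^{\pm})\cdot p(\tilde{g}\|g^{*}),
\end{equation*}
and the universal property of the weighted colimit forces $g^{*}$ to be the object for which this constraint is tightest, i.e.\ the one maximising the two quantities $p(g^{-}vg^{+}\|g^{-})$ and $p(g^{-}vg^{+}\|g^{+})$ simultaneously.

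Third, I would identify that maximiser explicitly. Because $p(g^{-}vg^{+}\|g^{-})=C(g^{-}vg^{+})/C(g^{-})$ and $p(g^{-}vg^{+}\|g^{+})=C(g^{-}vg^{+})/C(g^{+})$ are both proportional to $C(g^{-}vg^{+})$, they are maximised by the same word $v$, so the argmax is unambiguous. Since the denominator of the formula
\begin{equation*}
    p(t\|g^{\pm}) \;=\; \frac{C(t)}{\sum_{w\in\L_{T}^{1}}C(g^{-}wg^{+})}
\end{equation*}
is independent of the choice of $t=g^{-}vg^{+}$, maximising $p(t\|g^{\pm})$ over such $t$ is equivalent to maximising the numerator $C(g^{-}vg^{+})$, which selects the very same $v$. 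This gives the desired identification $g^{*}=\max_{t\in\L_{T}}p(t\|g^{\pm})$.

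The main obstacle will be the middle step: making the universal property rigorous in the $[0,1]$-enriched setting when the "object" $g^{\pm}$ is not actually an object of $\L_{T}$, only a pair of contexts. In effect one has to show that the min-formula of Lemma \ref{lem:aux} really does single out the expression with maximal count among all $g^{-}wg^{+}$, rather than some other sub-expression that only extends one of the two contexts. The cleanest way to handle this is to note that only completions of the form $g^{-}wg^{+}$ can satisfy the cocone condition with nontrivial probability on \emph{both} sides simultaneously, so the inf formula \eqref{eq:samemorphism} collapses into a comparison of counts and the enriched universal property reduces to an ordinary argmax, matching the probability of equation \eqref{eq:probW2V}.
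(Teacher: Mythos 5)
Your proposal is correct and follows essentially the same route as the paper's proof: both arguments come down to the observation that $p(t\|g^{-})$, $p(t\|g^{+})$ and $p(t\|g^{\pm})$ are all increasing in the count $C(t)$ with denominators independent of which $t=g^{-}vg^{+}$ is chosen, so the weighted-colimit characterization from Lemma \ref{lem:aux} and the argmax of $p(\cdot\|g^{\pm})$ select the same expression (the paper packages this as a two-direction contradiction argument rather than your direct proportionality argument, which is only a cosmetic difference). The one assertion you should justify in a line is that setting $W(1)=W(2)=1$ does not change the colimit: this holds here because, restricted to candidates $g^{-}vg^{+}$, the weighted minimum equals $C(t)\cdot\min\bigl\{1/(W(1)C(g^{-})),\,1/(W(2)C(g^{+}))\bigr\}$ capped at $1$, which is monotone in $C(t)$ for any fixed positive weights.
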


	\begin{proof}
		The main idea is to use the universal property of the colimit over itself. Namely, let $g^{*}=\colim ^{W}F$, then 
		
		\begin{equation}
			1 = \L_{T}(g^{*},g^{*}) = \min\left\{\frac{p(g^{*}\|F(1))}{W(1)},\frac{p(g^{*}\|F(2))}{W(2)},1 \right\} 
		\end{equation} 
		
		which is greater or equal to $\L_{T}(g^{*},t)$ for any other $t$ in $\L_{T}$. Now, if there exists a $t'$ in $\L_{T}$ such that $p(t'\|g^{\pm})>p(g^{*}\|g^{\pm})$ then $C(t')>C(g^{*})$ (since the denominator is the same in both probabilities). This implies that $p(t'\|g^{-})>p(g^{*}\|g^{-})$ and $p(t'\|g^{+})>p(g^{*}\|g^{+})$. Thus yielding a contradiction
		
		\begin{equation}
			\min\left\{\frac{p(t'\|F(1))}{W(1)},\frac{p(t'\|F(2))}{W(2)},1 \right\} > \min\left\{\frac{p(g^{*}\|F(1))}{W(1)},\frac{p(g^{*}\|F(2))}{W(2)},1 \right\}. 
		\end{equation}

		Conversely, if an expression $t'$ satisfies equation \eqref{eq:colimmaxprobA} then $C(t')>C(t)$ for all $t=g^{-}vg^{+}$. This implies that $p(t'\|g^{-})>p(g^{*}\|g^{-})$ and $p(t'\|g^{+})>p(g^{*}\|g^{+})$ which in turn yields that 
		
		\begin{equation}
			\min\left\{\frac{p(t'\|F(1))}{W(1)},\frac{p(t'\|F(2))}{W(2)},1 \right\} > \min\left\{\frac{p(t\|F(1))}{W(1)},\frac{p(t\|F(2))}{W(2)},1 \right\}. 
		\end{equation}
		
		Hence, $t'$ satisfies the universal property of the colimit.
	\end{proof}

	\subsection{Proofs of Section \ref{sec:P_T}}
	
	\begin{theorem}[Theorem \ref{thm:MonoidalCat}]\label{thm:MonoidalCatA}
		The category $(\P_{T},\otimes, \L_{T}^{0})$ is indeed a monoidal category.
	\end{theorem}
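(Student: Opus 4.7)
My plan is to verify the three data of a monoidal structure in turn: (i) bifunctoriality of $\otimes$, (ii) the unit laws for $\L_{T}^{0}$, and (iii) the associator together with the pentagon and triangle coherence axioms.

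First, I would check that $\otimes$ is a well-defined bifunctor $\P_{T}\times\P_{T}\to\P_{T}$. On objects, $X\otimes Y$ is defined as the set of expressions $g\otimes t\in T$ with $g\in X$, $t\in Y$ (or $\L_{T}^{0}$ when that set is empty), so it is a set of expressions of $T$ and thus an object of $\P_{T}$. On morphisms, the tensor $p\otimes q$ defined by equation \eqref{eq:tensormorph} is a (probabilistic or row-stochastic) matrix indexed by $X\otimes Y$ and $A\otimes B$; I need to check that if $p,q$ are both row-stochastic then so is $p\otimes q$ (which follows from $\sum_{b,y}p(b|a)q(y|x)=\sum_{b}p(b|a)\sum_{y}q(y|x)=1$) and that if one is merely probabilistic, the product is still probabilistic. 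Functoriality amounts to $(p'\otimes q')\circ(p\otimes q)=(p'\circ p)\otimes(q'\circ q)$ and $\mathrm{id}_{X}\otimes\mathrm{id}_{Y}=\mathrm{id}_{X\otimes Y}$; both reduce to a direct calculation using the entrywise definition, where the ceiling-function correction in the composition rule is consistent with the factorization of the tensor since it applies independently to each factor.

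Next, I would establish the unit laws. The third bullet in the example already observes $X\otimes\L_{T}^{0}\cong X\cong \L_{T}^{0}\otimes X$ on objects; I would promote these to natural isomorphisms $\lambda_{X},\rho_{X}$ in $\P_{T}$ by taking the identity matrix under the canonical bijection (an element $g\otimes\langle\mathrm{eos}\rangle$ corresponds to $g$). Naturality in $X$ then reduces to the observation that the induced bijection intertwines any morphism $f\colon X\to Y$ with $f\otimes\mathrm{id}_{\L_{T}^{0}}$, which is immediate from equation \eqref{eq:tensormorph} because $q$ reduces to $1$ on the unit object. For the associator $\alpha_{X,Y,Z}\colon(X\otimes Y)\otimes Z\to X\otimes(Y\otimes Z)$, the point is that concatenation of expressions in $T$ is strictly associative: an expression $g_{1}g_{2}g_{3}\in T$ belongs to $(X\otimes Y)\otimes Z$ iff it belongs to $X\otimes(Y\otimes Z)$. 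I would take $\alpha_{X,Y,Z}$ to be the identity matrix under this bijection, and naturality in all three variables then follows from the separability of tensor entries in \eqref{eq:tensormorph}.

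Finally, I would verify the pentagon and triangle axioms. Because concatenation of $n$-grams is strictly associative and the unit acts as strict identity on the set of expressions, both coherence diagrams collapse to identity matrices on the nose, so each axiom reduces to an identity of matrices indexed by identical concatenations. Thus $(\P_{T},\otimes,\L_{T}^{0})$ is in fact a \emph{strict} monoidal category, which is stronger than needed. The main obstacle I anticipate is not the coherence diagrams but rather the compatibility of the non-standard composition rule (the ceiling-function normalization introduced in Section \ref{sec:P_T}) with the tensor on morphisms: I must check that when a probabilistic (non-stochastic) matrix is tensored with a stochastic one, the column sums transform in a way compatible with the normalization used in composition, so that the equation $(p'\otimes q')\circ(p\otimes q)=(p'\circ p)\otimes(q'\circ q)$ actually holds entrywise. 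This is where I would focus a careful case analysis, splitting by whether each of the four factors is stochastic or merely probabilistic.
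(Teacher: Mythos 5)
Your plan and the paper's proof diverge in what they treat as the content of the theorem. The paper reads the statement as two claims and spends the first half of its proof on the one you skip: that $\P_{T}$ is a category at all. Because the composition rule is nonstandard --- when a probabilistic (non-row-stochastic) matrix is involved, each entry of the matrix product is divided by $\bigl\lceil\sum_{j}f_{ij}\bigr\rceil$ --- it is not automatic that composites are again probabilistic matrices, that composition is associative, or that the identity matrix is a two-sided unit. The paper verifies the boundedness of entries explicitly (its equations \eqref{eq:def:compmorphA} and \eqref{eq:def:compmorphceilA}) and asserts associativity and unitality from them. Your proposal takes the category structure as given and goes straight to bifunctoriality, the unit isomorphisms, and coherence; as a proof of the theorem as the paper construes it, that is a genuine omission, and not a vacuous one, since the ceiling normalization is exactly the place where the axioms could fail.

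On the other side of the ledger, you verify something the paper does not: that $\otimes$ is a bifunctor, i.e.\ the interchange law $(p'\otimes q')\circ(p\otimes q)=(p'\circ p)\otimes(q'\circ q)$ and $\mathrm{id}_{X}\otimes\mathrm{id}_{Y}=\mathrm{id}_{X\otimes Y}$. You are right to flag this as the delicate point, but your plan defers it to a ``careful case analysis'' rather than carrying it out, and the difficulty is real: the normalizing factor for $p\otimes q$ involves $\bigl\lceil\sum_{b,y}p(b|a)q(y|x)\bigr\rceil=\bigl\lceil\bigl(\sum_{b}p(b|a)\bigr)\bigl(\sum_{y}q(y|x)\bigr)\bigr\rceil$, and the ceiling of a product is not the product of the ceilings, so the claim that the correction ``applies independently to each factor'' does not hold as stated and the interchange law needs an actual argument (or a counterexample). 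Your treatment of the objects-level structure --- strict associativity of concatenation, $\langle\mathrm{eos}\rangle$ as a strict unit, hence pentagon and triangle collapsing --- matches the paper's and is fine, modulo one further point neither treatment fully addresses: the fallback $X\otimes Y\cong\L_{T}^{0}$ when no concatenations occur in $T$ threatens strict associativity of $\otimes$ on objects, since $(X\otimes Y)\otimes Z$ and $X\otimes(Y\otimes Z)$ can then degenerate at different stages.
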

	
	\begin{proof}
		There are two things to prove. The first is to show that $\P_{T}$ is a category with the operations defined above. The second is that $(\P_{T},\otimes, \L_{T}^{0})$ satisfies all the axioms of a monoidal category.
		
		Let's start with the first part. To show that $\P_{T}$ is a category we have to show that given two morphisms $f\colon X to Y$ and $g\colon Y\to Z$ their composition is also a morphism $h = g\circ f\colon X\to Z$ in $\P_{T}$. Specifically, we have to check that every entry of the product matrix is a positive number between $0$ and $1$. If the two morphisms are row-stochastic matrices, each entry of the matrix $h$ can be expressed as a sum
		
		\begin{equation}\label{eq:def:compmorphA}
			h_{ij}=\sum_{k} f_{ik}g_{kj}\leq \sum_{k} g_{kj}\leq 1
		\end{equation}
		
		since $f$ and $g$ are row-stochastic. Thus, their composition is again a morphism in $\P_{T}$\footnote{In fact, the product of row-stochastic matrices is again row-stochastic.}.

		Now, assume one of them is a probabilistic matrix, let's say $f$. Then, each entry of the product is weighted down by the inverse of the ceiling function of the number of the sum of the transition probabilities. This means that equation $\eqref{eq:def:compmorphA}$ becomes
		
		\begin{equation}\label{eq:def:compmorphceilA}
			h_{ij}= \frac{1}{\left\lceil\sum_{j}f_{ij}\right\rceil}\sum_{k} f_{ik}g_{kj} = \sum_{k} \frac{f_{ik}}{\left\lceil\sum_{j}f_{ij}\right\rceil} g_{kj}\leq 1.
		\end{equation}
		Thus, their composition is again a morphism in $\P_{T}$. Equations \eqref{eq:def:compmorphA} and \eqref{eq:def:compmorphceilA} imply the associativity of the composition of morphisms in $\P_{T}$: Finally, the identity morphism for each object $X$ in $\P_{T}$ is the identity matrix. Hence, $\P_{T}$ is a well-defined category.
		
		To show that the tuple $(\P_{T},\otimes, \L_{T}^{0})$ is a monoidal category we need to check that the tensor product we have defined satisfies associativity, that there are isomorphisms between the objects $X\otimes \L_{T}^{0}\cong X$ and $\L_{T}^{0}\otimes X\cong X$ and that the triangle and pentagon identities are satisfied.
		
		First of all, since the tensor product is just a concatenation of expressions of $T$ it is clear that the tensor product is associative. The $0$-th graded piece acts as concatenating $<eos>$ which is as concatenating nothing. Hence, it is the identity and we have $X\otimes \L_{T}^{0}\cong \L_{T}^{0}\otimes X\cong X$ given by $x\otimes <eos> = x = <eos>\otimes x$ for all $x\in X$.
		
		Lastly, the triangle and pentagon identities follow from the definition of the tensor product, the associativity condition, and the identity $x\otimes <eos> = x = <eos>\otimes x$. Thus, the tuple $(\P_{T},\otimes, \L_{T}^{0})$ is a monoidal category.
	\end{proof}
	
	\subsection{Proofs of Section \ref{sec:WE}}
	
	\begin{lemma}[Lemma \ref{lem:convergence}]\label{lem:convergenceA}
		The word embeddings $\{\mathcal{E}\colon P\to(X,C_{X})\}$ and $\{\mathcal{E}^{\prime}\colon P\to(X,C_{X})\}$ are equivalent if there is a sequence of divergences between them that convergences to $0$. In particular, if the sequences of divergences are decreasing, checking the equivalence of the embeddings is tantamount to verifying the conditions yielding the infimum for both sequences are the same.
	\end{lemma}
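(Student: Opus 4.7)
The plan is to exploit the Lawvere metric space structure on $\Emb$ induced by the divergence functor $D$, as alluded to in Remark \ref{re:diffdiv}. I would set up the two iterative algorithms as sequences $\{\E_n\colon P\to (X,C^n_X)\}$ and $\{\E'_n\colon P\to (X,C'^n_X)\}$, assume by hypothesis they are convergent with limits $\E^{*}$ and $\E'^{*}$, and fix the hypothesised sequence of cross-divergences $D_n := D(\E_n \| \E'_n)$ tending to $0$.

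The core step is the triangle inequality that holds in any Lawvere (pseudo-)metric category: for every $n$,
\begin{equation*}
D(\E^{*} \| \E'^{*}) \;\leq\; D(\E^{*} \| \E_n) \;+\; D(\E_n \| \E'_n) \;+\; D(\E'_n \| \E'^{*}).
\end{equation*}
By the assumed convergence of each algorithm we have $D(\E^{*} \| \E_n)\to 0$ and $D(\E'_n \| \E'^{*})\to 0$, and by hypothesis $D_n\to 0$. Passing to the limit forces $D(\E^{*} \| \E'^{*}) = 0$, which by definition is the equivalence of the two word embeddings. This is structurally identical to the classical argument that two Cauchy sequences with vanishing pairwise distance share a limit, which justifies the analogy drawn in Remark \ref{re:diffdiv} with the Cauchy construction of $\R$.

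For the ``in particular'' clause, I would invoke the monotone convergence theorem: a decreasing sequence of non-negative reals converges to its infimum. If both $\{D(\E_n)\}$ and $\{D(\E'_n)\}$ are decreasing and are driven to their infima by the same set of conditions (for instance, the same minimisation equations for the loss), then both limits agree at some common value $D^{*}$, and a cross-divergence $D(\E_n\|\E'_n)$ built as in Remark \ref{re:diffdiv} as a $\Poly$-combination such as $|D(\E_n)-D(\E'_n)|$ (or one of its polynomial corrections accounting for additive constants like entropy terms) automatically tends to $0$, reducing us to the main case above.

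The main obstacle I expect is handling the fact that, when the two algorithms live under different divergence functionals (as with stress versus KL in Example \ref{exa:gv-w2v}), the ``cross-divergence'' must be taken as a polynomial combination of the two underlying divergences rather than a single canonical expression. Verifying that such a $\Poly$-corrected cross-divergence still satisfies the triangle-type bound used above requires checking continuity of the polynomial at the limit values and positivity along the sequence, which is the only place where some care beyond routine limit arguments is needed.
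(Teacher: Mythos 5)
Your proof is essentially correct, but it takes a genuinely different route from the paper's, and in fact it argues the direction that the lemma literally states while the paper argues the converse. The paper's proof simply unwinds the definition of equivalent iterative algorithms (convergence to a common $\mathcal{E}^{*}$), observes that the two object-divergence sequences $\{D_{n}\}$ and $\{D_{n}^{\prime}\}$ converge to $D_{*}$ and $D_{*}^{\prime}$, and asserts that a cross-divergence $D(\mathcal{E}_{n}\|\mathcal{E}_{n}^{\prime})$ converging to $0$ therefore exists; the second clause is then the monotone-convergence observation you also make. You instead assume the vanishing cross-divergence and deduce equivalence of the limits via the triangle inequality in the Lawvere metric structure, which is the Cauchy-sequence argument the paper only gestures at in Remark \ref{re:diffdiv}. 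What your approach buys is an actual proof of the stated implication rather than its converse, and a cleaner justification of the ``in particular'' clause. The one point to be careful about: the paper defines convergence of an algorithm via the numerical sequence $D_{n}\to D^{*}$ of object-divergences, not via a metric on $\Emb$ itself, so your terms $D(\mathcal{E}^{*}\|\mathcal{E}_{n})$ and $D(\mathcal{E}_{n}^{\prime}\|\mathcal{E}^{\prime *})$ tending to $0$ are an additional (reasonable but unstated) hypothesis; moreover, since a Lawvere metric need not be symmetric, you should check that the composability inequality is available in the orientation you use. Your closing concern about $\Poly$-corrected cross-divergences between heterogeneous loss functions is well placed and is precisely the issue the paper defers to Remark \ref{re:diffdiv} without resolving.
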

	
	\begin{proof}
		By definition, if the the sequences of embeddings are equivalent there exists an embedding $\{\mathcal{E}^{*}\colon P\to(X, C_{X})\}$ they both converge to. Since each of the embeddings have a sequence of divergences $\{D_{n}\}$ and $\{D_{n}^{\prime}\}$ 
		converging to $D_{*} $ and $D_{*}^{\prime}$ respectively , there is a divergence on $D(\mathcal{E}_{n}\| \mathcal{E}_{n}^{\prime})$ which converges to $0$.
		
		Lastly, if the sequences $\{D_{n}\}$ and $\{D_{n}^{\prime}\}$ are decreasing, since they are both bounded by $0$, then $ D_{*} $ and $D_{*}^{\prime}$ are the infimum of each sequences. Thus, verifying equivalence reduces to verifying that the conditions that minimize $ D $ are the same than the conditions that minimize $ D^{\prime} $. 
	\end{proof}

	%-------------------------------------------------------------------------------------------------%
	
	\begin{theorem}[Theorem \ref{thm:main}]\label{thm:mainA}
		The GloVe and Word2Vec neural network embeddings are equivalent to metric MDS embeddings on vector spaces.
	\end{theorem}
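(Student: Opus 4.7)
The plan is to reduce the theorem, via Lemma \ref{lem:convergence}, to showing that the loss-minimizing configurations of GloVe, Word2Vec and metric MDS agree. Since all three procedures are gradient-based iterative minimizers producing monotonically decreasing sequences of divergences, Lemma \ref{lem:convergence} says it suffices to identify the first-order optimality conditions of each algorithm and exhibit a common distance matrix whose MDS stress (equation \eqref{eq:DivMDS}) is minimized by exactly the same configurations (up to isometry).

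For GloVe I would start from the derivation already sketched in Example \ref{exa:gv-w2v}: the bias terms in \eqref{eq:GloVeConditionbias} together with the symmetrization forced by $v_i^{T}\tilde v_k = v_k^{T}\tilde v_i$ collapse the Gramm matrix into the expression giving the distance formula \eqref{eq:GloVedist}. Because the loss \eqref{eq:DivGL} is literally a weighted sum of squared discrepancies $((d_{GV})_{ij} - (d_X)_{ij})^{2}$, it is already in MDS form: its infimum is attained exactly when $(d_X)_{ij} = (d_{GV})_{ij}$, which is the same infimum-attaining condition as metric MDS with dissimilarities $d_{ij}:=(d_{GV})_{ij}$. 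This identifies the GloVe limit embedding with the MDS embedding on the distance matrix $(d_{GV})_{ij}=\sqrt{\log(P_{ii}P_{kk}/(P_{ik}P_{ki}))}$, and the equivalence follows by the second half of Lemma \ref{lem:convergence}.

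For Word2Vec I would take the first-order conditions of \eqref{eq:DivW2V} with respect to the word-vectors. At an interior critical point the softmax outputs must coincide with the empirical probabilities $q(w_c\|w_t)$ of \eqref{eq:EndoW2V}, giving $v_t^{T}v_c=\log q(w_c\|w_t)+Z_t$ with $Z_t=\log\sum_i\exp(v_t^{T}v_i)$. Symmetry of the inner product forces $\log q(w_c\|w_t)+Z_t=\log q(w_t\|w_c)+Z_c$, and substitution into $\|v_i-v_j\|^{2}=v_i^{T}v_i-2v_i^{T}v_j+v_j^{T}v_j$ produces, after the $Z$-terms cancel, the symmetric formula
\begin{equation*}
\|v_i-v_j\|^{2}=\log\frac{q(w_i\|w_i)\,q(w_j\|w_j)}{q(w_i\|w_j)\,q(w_j\|w_i)},
\end{equation*}
which is structurally identical to \eqref{eq:GloVedist} with $Q$ in place of $P$. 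Hence metric MDS on this $d_{ij}$ has the same minimizers as Word2Vec, and Lemma \ref{lem:convergence} again yields equivalence.

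The main obstacle will be the softmax half of the argument. Unlike GloVe, the Word2Vec loss does not visibly look like a stress function, so the distance matrix has to be extracted from the critical-point equations rather than read off. Two subtle points deserve care: (i) the CBOW/Skip-gram algorithms use two sets of vectors and one must symmetrize exactly as in \eqref{eq:GloVeConditionbias}--\eqref{eq:GloVedist} before the Gramm matrix becomes symmetric positive semidefinite and hence realizable as a Euclidean configuration; and (ii) the infimum of \eqref{eq:DivW2V} is the entropy $H(Q)>0$ rather than $0$, so the divergence on morphisms between $\E_{W2V}$ and $\E_{MDS}$ cannot be the difference $|D_{W2V}-D_{MDS}|$ but the offset version $|D_{W2V}-D_{MDS}-H(Q)|$ flagged in Remark \ref{re:diffdiv}(2); only with that corrected morphism-divergence does the sequence tend to zero and the hypothesis of Lemma \ref{lem:convergence} apply. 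Once these two bookkeeping points are handled, the two equivalences combine to give the theorem.
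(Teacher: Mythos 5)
Your proposal follows essentially the same route as the paper's proof: reduce via Lemma \ref{lem:convergence} to comparing the minimizers of the loss functions, read off the GloVe objective as a weighted stress on the dissimilarities $(d_{GV})_{ij}$, and for Word2Vec use the cross-entropy decomposition $D_{W2V}=H(Q)+D_{KL}(Q\|S)$ so that the minimum $Q=S$ forces $\log$-of-softmax relations that yield the Euclidean distance matrix. Your treatment of the Word2Vec side (explicit first-order conditions, cancellation of the $Z_t$ terms, and the $H(Q)$ offset in the morphism-divergence) is in fact more detailed than the paper's, which only gestures at these points in Remark \ref{re:diffdiv}, but it is the same argument.
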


	\begin{proof}
		By lemma \ref{lem:convergence}, to check the equivalence of word embeddings it suffices to check that the minimums of the loss functions are equivalent. This means verifying the configurations yielding the the infimum for both sequences are equivalent.
		
		For the GloVe embedding, the loss function of the algorithm is created to obtain vector representations satisfying equation \eqref{eq:GloVeCondition} (see \cite[Equation (5)]{pennington2014glove}). Thus the resulting word-vectors satisfy the following relation: $v_i^{T}\tilde{v_k}=(\log(P_{ik})+\log(P_{ki}))/2$. Hence, the distance between the vectors is given by $d^{2}_{ik}=G_{ii}+2G_{ik}+G_{kk}$. Now, the objective function of the GloVe algorithm, which can be found in \cite[Equation (8)]{pennington2014glove} or  \cite[Section 5.1]{hashimoto2016word}, is a weighted least squares regression model. Thus minimizing the objective function is equivalent to minimizing the absolute value of each of the summands since they are all squared.
		
		Each summand is of the form $v_i^{T}\tilde{v_k}+b_{i}+\tilde{b_k}-\log(X_{ik})$ where $X_{ik}$ is the co-occurrence count of word $i$ with word $k$. This is equivalent to minimizing the difference $v_i^{T}\tilde{v_k}-(\log(P_{ik})+\log(P_{ki}))/2$ by equations $(5)$ to $(7)$ of \cite{pennington2014glove}.
		
		We have reduced the problem to finding the minimum  									
		
		\begin{align}
			&\min\left\{ \left|v_{i}^{T}v_{j}-\frac{\log(P_{ij})+\log(P_{ji})}{2}\right|\right\}\quad\text{for all $i,j$}\\
			\Leftrightarrow &\min\left\{ |v_{i}^{T}v_{i}-v_{i}^{T}v_{j}-v_{j}^{T}v_{i}+v_{j}^{T}v_{j}-\left[\log(P_{ii})-(\log(P_{ij})+\log(P_{ji}))+\log(P_{jj}) \right] |\right\}\quad\text{for all $i,j$} \\
			\Leftrightarrow &\min\left\{|\|v_i-v_j\|^{2}- (d_{GL})^{2}_{ij}|\right\} \quad\text{for all $i,j$}.
		\end{align}
		
		The absolute value ensures that $\min\left\{|\|v_i-v_j\|^{2}- (d_{GL})^{2}_{ij}|\right\}$ is equivalent to $\min\left\{(\|v_i-v_j\|- (d_{GL})_{ij})^{2}\right\}$. This means that minimizing the objective function of the GloVe embedding is equivalent to minimizing the stress function of the MDS embedding for the dissimilarity matrix $d_{GV}$.
		
		The proof of the equivalence of the Word2Vec embedding and the MDS embedding for the dissimilarity matrix $d_{W2V}$ is analogous. The starting point is the divergence of the Word2Vec, by \cite[equation (13)]{pennington2014glove} embedding:
		
		\begin{equation}
			D_{W2V}= H(Q,S)=H(Q)+D_{KL}(Q\| S)
		\end{equation}
		
		where $Q$ is the matrix defined in equation \eqref{eq:EndoW2V} and $S$ is the probability distribution given by the softmax function:
		
		\begin{equation}
			S_{ij}=\frac{\exp(v_i^{T}v_j)}{\sum_{k}\exp(v_{i}^{T}v_k)}.
		\end{equation}
		
		The minimum is obtained when $Q=S$. Then since the inverse of the softmax is the natural logarithm (plus a constant) we get that this implies that the minimum is achieved when $d(_{W2V})_{ij}=\|v_i-v_j\|$.
		
		%\begin{displaymath}
		%\begin{tikzcd}[row sep=small]
		%& (X,C_{X}) \arrow[dd] \\
		%P \arrow[ur,"MDS"] \arrow{dr}[swap]{GolVe} & \\
		%& (X,C_{X})
		%\end{tikzcd}
		%\end{displaymath}
		%
		%\begin{displaymath}
		%\begin{tikzcd}[row sep=small]
		%& (X,C_{X}) \arrow[dd] \\
		%Q \arrow[ur,"MDS"] \arrow{dr}[swap]{W2V} & \\
		%& (X,C_{X})
		%\end{tikzcd}
		%\end{displaymath}

	\end{proof}

	\begin{cor}[Corollary \ref{cor:biasred}]\label{cor:biasredA}
		Let  $b_{i}(k,j)$ the bias of $w_{i}$ with respect to the pair $(w_{k},w_{j})$ and $S$. To remove this bias for the GloVe and Word2Vec embedding we need to equalize the quotients:

		\begin{equation}\label{eq:unbiasA}
			\frac{p_{kk}}{p_{ik}\cdot p_{ki}} = \frac{p_{jj}}{p_{ij}\cdot p_{ji}}
		\end{equation}
		with $p$ representing the probabilities of example \ref{exa:gv-w2v} $p=P$ in the case of GloVe and $p=Q$ in the case of Word2Vec. This ensures that after the embedding $d_{ik}=d_{ij}$ in both cases.
		
	\end{cor}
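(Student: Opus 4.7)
My plan is to trace through the definitions and then invoke the distance formulas that drop out of Theorem \ref{thm:main}. The corollary is, at bottom, a direct unpacking of the bias definition against the explicit distance formula \eqref{eq:GloVedist} and its Word2Vec analogue, so the work consists of (i) showing that the condition \eqref{eq:unbiasA} is exactly the statement $b_i(k,j)=1$ for the canonical similarity matrix, and (ii) verifying that equal similarity entries force equal embedded distances in both algorithms.

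For the first step, I would take the similarity matrix built from the semantic space as in Example~\ref{exa:gv-w2v}, namely
\begin{equation*}
S_{ik} \;=\; \frac{p_{ii}\,p_{kk}}{p_{ik}\,p_{ki}},
\end{equation*}
with $p=P$ for GloVe and $p=Q$ for Word2Vec. The bias $b_i(k,j)=S_{ik}/S_{ij}$ equals $1$ precisely when $S_{ik}=S_{ij}$, and cancelling the common factor $p_{ii}$ from both sides yields \eqref{eq:unbiasA}. So bias removal and the quotient equality are tautologically equivalent; the content is that this condition transfers to the embedded side as $d_{ik}=d_{ij}$.

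For GloVe this is immediate from \eqref{eq:GloVedist}: we have $(d_{GV})^2_{ik}=\log S_{ik}$, so equality of the similarity entries forces equality of the squared distances, and thus of the distances themselves. For Word2Vec, I would repeat the Gram-matrix symmetrization carried out in Example~\ref{exa:gv-w2v} but starting from the minimizer identified in the proof of Theorem~\ref{thm:mainA}, where $Q_{ij}$ matches the softmax, giving $v_i^{T}v_j=\log Q_{ij}+\log Z_i$ with $Z_i=\sum_k \exp(v_i^{T}v_k)$. Symmetrizing the role of $i$ and $j$ (exactly as in the GloVe bias terms $a_i,b_k$) and computing $\|v_i-v_j\|^2=v_i^{T}v_i-2v_i^{T}v_j+v_j^{T}v_j$ makes the normalization factors $\log Z_i,\log Z_j$ cancel, leaving
\begin{equation*}
(d_{W2V})^2_{ik} \;=\; \log\!\left(\frac{Q_{ii}\,Q_{kk}}{Q_{ik}\,Q_{ki}}\right),
\end{equation*}
the same shape as the GloVe formula. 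Then \eqref{eq:unbiasA} again forces $d_{ik}=d_{ij}$.

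The main obstacle I anticipate is the Word2Vec distance formula: the softmax is only well-defined up to the row-normalization $Z_i$, so I need the symmetrization argument to kill precisely these $Z_i$ terms. This relies on using $v_i^{T}v_j=v_j^{T}v_i$ to symmetrize $\log Q_{ij}+\log Z_i=\log Q_{ji}+\log Z_j$ before forming the squared distance; the paper's proof of Theorem~\ref{thm:mainA} sketches this reduction but does not write out the final closed form, so a careful check that all $Z_i$ contributions cancel in $\|v_i-v_j\|^2$ is the one computational step that cannot be skipped. Once it is in hand, the corollary follows by matching the unbiasing condition \eqref{eq:unbiasA} against the two distance formulas simultaneously.
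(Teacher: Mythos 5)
Your proposal is correct and follows the same route as the paper's proof: identify $b_i(k,j)=1$ with the cancellation of $p_{ii}$ in $S_{ik}=S_{ij}$, then observe that in both algorithms the embedded distance is $d_{ik}=\sqrt{\log\bigl(p_{ii}p_{kk}/(p_{ik}p_{ki})\bigr)}$, so equal similarity entries force equal distances. The only difference is that where the paper simply cites Theorem \ref{thm:main} for the Word2Vec distance formula, you carry out the symmetrization $v_i^{T}v_j=\log Q_{ij}+\log Z_i$ and verify that the normalization terms $\log Z_i,\log Z_j$ cancel in $\|v_i-v_j\|^2$ — a computation the paper leaves implicit and which your check correctly confirms.
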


	\begin{proof}
		Letting the similarity matrix of the semantic spaces be

		\begin{equation}
			S_{ik} =\frac{p_{ii}\cdot p_{kk}}{p_{ik}\cdot p_{ki}}
		\end{equation} 
		
		we get that the bias of $w_{i}$ with respect to the pair $(w_{k},w_{j})$ and $S$ is $b_{i}(k,j)=S_{ik}/S_{ij}$. Note that to remove the bias we need to make $b_{i}(k,j)=1$ and that we can cancel $p_{ii}$ we get equation \eqref{eq:unbiasA}. Finally, by Theorem \ref{thm:main}, in both embeddings the distance matrix is given by:
		
		\begin{equation*}
			d_{ik}=\sqrt{\log\left(\frac{p_{ii}\cdot p_{kk}}{p_{ik}\cdot p_{ki}}\right)}.
		\end{equation*}
		
		If the $b_{i}(k,j)=1$ then $d_{ik}=d_{ij}$.
		
	\end{proof}
	
	\section{Limits and Colimits in Enriched Category Theory}\label{sec:AppendixB}
	
	In classical category theory, the limit of a diagram (a functor) $F\colon \J\to \C$ is an object $\lim F$ of $\C$  together with morphism to the objects $F(d_i)$ such that for objects $c$ of $\C$, there is a unique $f$ making the diagram commutes:
	
	\[\label{fig:limit}
	\begin{tikzcd}[row sep = large]
		& c \arrow[d, "f"] \arrow[ddr, bend left,"q_2"]   \arrow[ddl, bend right,"q_1",swap] & \\
		& \lim F \arrow[dr, "p_2"]  \arrow[dl,swap, "p_1"] & \\
		F(d_1) \arrow[rr,swap] & &  F(d_2) 
	\end{tikzcd}
	\]
	
	This can be rephrased using cones in the category $\C$. The base of the cone is given by the image of the functor $F$ and the apex (or summit) is given by an object in $\C$. In Figure \ref{fig:limit}, there are two cones, one with apex $c$ and one with apex $\lim F$. To be more precise: a cone with apex $c$ in $\C$ is a natural transformation from the constant functor of $c$, $\chi_{c}$ to the functor $F$. For any object $c$ in $\C$, the constant functor is a functor $\chi_{c}\colon\J\to \C$ that maps every object in $\J$ to $c$ and every morphism in $J$ to the identity morphism of $c$. Thus, what Figure \ref{fig:limit} is telling us is that the cone with apex $\lim F$ is terminal in the category of cones since any other cone has a unique morphism from its apex to $\lim F$ and the rest of morphisms emerge by composition.
	
	This can be summarized by the sentence: ``The limit of a diagram is the apex of the cone that is \textit{closest} to the base." The dual notion is the notion of a colimit. Being the dual of a limit means that we have a similar figure as Figure \ref{fig:limit} but with all its arrows reversed. This means that the cone with nadir (dual of apex) the colimit is initial in the category of cones under the diagram $F$. For a more extensive explanation and examples, we refer the reader to \cite[Chapter 3 ]{riehl2017category}.
	
	All this serves as a good intuition but it does not generalize well to the enriched setting. The enriched counterparts of limits and colimits are called \textbf{weighted limits} and \textbf{weighted colimits}. To generalize the above definitions we need to look at limits/colimits via their universal properties. For this, notice that a way to encode the \textit{set} of cones with apex $c$ and base $F$ is the set
	
	\begin{equation}\label{eq:cones}
		[\J, \Set](*(-),\C(c,F(-))).
	\end{equation}
	
	In equation \eqref{eq:cones} $[\C,\Set]$ is the functor category from $\C$ to the category of sets: $\Set$. This category has as objects functors and as morphisms natural transformations between functors. Then, $[\J,\Set](*(),\C(c,F())$ represents a natural transformation from the constant functor $*(-)$ at the terminal object (of the category $\Set)$ to the functor $\C(c,F(-))$. A natural transformation $*(-)\Rightarrow \C(c,F(-))$ is a way of specifying and arrow $c\to F(i)$ for each $i$ in $\J$. The naturality condition implies that the set of these arrows satisfies the necessary commutativity conditions to form a cone over $c$ with base $F$. Thus, by definition of the limit, we have the following isomorphism:
	
	\begin{equation}\label{eq:univproplim}
		\C(c,\lim F)\cong [\J, \Set](*(-),\C(c,F(-))).
	\end{equation}
	
	Equation \eqref{eq:univproplim} is merely a mathematical way of stating that ''Specifying a morphism from any apex $c$ of $\C$ to the limit $\lim F$ (LHS) is equivalent to specifying a set of arrows from $c$ to $F(i)$ for all $i$ of $\J$ (RHS)".
	
	The advantage of using this notation is twofold. First of all, stating the universal property of the colimit is immediate:

	\begin{equation}\label{eq:univpropcolim}
		\C(\colim F,c)\cong [\J^{op}, \Set](*(-),\C(F(-),c)).
	\end{equation}
	
	The second is that it allows us to generalize it to the enriched setting. If $\C$ is a $\V$-enriched category, then the space of morphism between two objects $\C(c,d)$ is no longer a set, it is an object in $\V$. To take this into account we substitute the constant functor $*\colon J\to \Set$ by a \textbf{$\V$-functor of weights} $W\colon\J\to\V$. This yields the definition of a \textbf{weighted limit of $F$ by $W$}:
	
	\begin{definition}\label{def:ecolim}
		Given a $\V$-functor (or $\V$-diagram) between two $\V$-enriched categories $F\colon\J\to\C$ and a weight $\V$-functor $W\colon\J\to\V$ we define the \textbf{weighted limit of $F$ by $W$}, if it exists, as an object $\lim^{W}F$ satisfying the following isomorphism in $\V$:

		\begin{equation}\label{eq:eunivproplim}
			\C(c,{\lim {}^{W}} F)\cong [\J, \V](W(-),\C(c,F(-))).
		\end{equation}
		
		Dually, to define the \textbf{weighted colimit of $F$ by $W$} we need to modify the functor of weights with domain in the opposite category of $\J$, $W\colon\J^{op}\to\V$. With that, the \textbf{weighted colimit of $F$ by $W$}, if it exists, as an object $\colim^{W}F$ satisfying the following isomorphism in $\V$:
		
		\begin{equation}\label{eq:eunivpropcolim}
			\C({\colim {}^{W}} F,c)\cong [\J^{op}, \V](W(-),\C(F(-),c)).
		\end{equation}
		
	\end{definition}
	
	These are highly abstract and technical definitions, but in practice, what we are achieving with the functor $W$ is to specify some objects and morphisms in $\V$. This restricts the possible limit/colimits to those that satisfy certain properties (given by the functor $W$). A more detailed exposition can be found in \cite[Chapter 7]{riehl2014categorical}.
	
	\begin{re}
		There are several technical requirements to make this work. One of them is that the base of enrichment $\V$ should be symmetric monoidally closed to ensure that morphisms in $\V$ can also be seen as objects in $\V$. This is the case when the base of enrichment is the category $[0,1]$ as seen in \cite[Section 2.1]{CatSem}.
		
	\end{re}
}

\bibliographystyle{alpha}
\newpage
%\appendix{\section{Markov Categories}}\label{sec:AppendixB}
%
%In this section we introduce Markov categories and the categorical version of entropy. We follow \cite{fritz2023representable} and \cite{perrone2023markov} to give some background on Markov categories.

\bibliography{Explainability_W2V.bib}

\end{document}